\DeclareMathOperator*{\argmax}{arg\,max}
\begin{document}

\newcommand{\methodName}{MiraGe}
\title{
\methodName: Multimodal Discriminative Representation Learning for Generalizable AI-Generated Image Detection
}

\author{Kuo Shi}
\affiliation{%
  \institution{University of Technology Sydney}
  \city{Ultimo}
  \state{NSW}
  \country{Australia}
}
\email{kuo.shi@student.uts.edu.au}

\author{Jie Lu}
\affiliation{%
  \institution{University of Technology Sydney}
  \city{Ultimo}
  \state{NSW}
  \country{Australia}
}
\email{jie.lu@uts.edu.au}

\author{Shanshan Ye}
\affiliation{%
  \institution{University of Technology Sydney}
  \city{Ultimo}
  \state{NSW}
  \country{Australia}
}
\email{shanshan.ye@student.uts.edu.au}

\author{Guangquan Zhang}
\affiliation{%
  \institution{University of Technology Sydney}
  \city{Ultimo}
  \state{NSW}
  \country{Australia}
}
\email{guangquan.zhang@uts.edu.au}

\author{Zhen Fang}
\authornote{Corresponding author.}
\affiliation{%
  \institution{University of Technology Sydney}
  \city{Ultimo}
  \state{NSW}
  \country{Australia}
}
\email{zhen.fang@uts.edu.au}

\begin{abstract}
Recent advances in generative models have highlighted the need for robust detectors capable of distinguishing real images from AI-generated images. 
While existing methods perform well on known generators, their performance often declines when tested with newly emerging or unseen generative models due to overlapping feature embeddings that hinder accurate cross-generator classification.
In this paper, we propose \emph{\textbf{M}ult\textbf{i}modal Disc\textbf{r}iminative Represent\textbf{a}tion Learning for \textbf{Ge}neralizable AI-generated Image Detection} (\methodName), a method designed to learn generator-invariant features. 
Motivated by theoretical insights on intra-class variation minimization and inter-class separation, \methodName\ tightly aligns features within the same class while maximizing separation between classes, enhancing feature discriminability. 
Moreover, we apply multimodal prompt learning to further refine these principles into CLIP, leveraging text embeddings as semantic anchors for effective discriminative representation learning, thereby improving generalizability.
Comprehensive experiments across multiple benchmarks show that \methodName\ achieves state-of-the-art performance, maintaining robustness even against unseen generators like Sora.
\end{abstract}

\begin{CCSXML}
<ccs2012>
   <concept>
       <concept_id>10002978.10003029</concept_id>
       <concept_desc>Security and privacy~Human and societal aspects of security and privacy</concept_desc>
       <concept_significance>500</concept_significance>
       </concept>
 </ccs2012>
\end{CCSXML}

\ccsdesc[500]{Security and privacy~Human and societal aspects of security and privacy}

\keywords{AI-generated Image Detection; CLIP
}

\maketitle

\section{Introduction}

\begin{figure}[t]
\begin{center}
\centerline{
\includegraphics[page=1, width=\columnwidth]{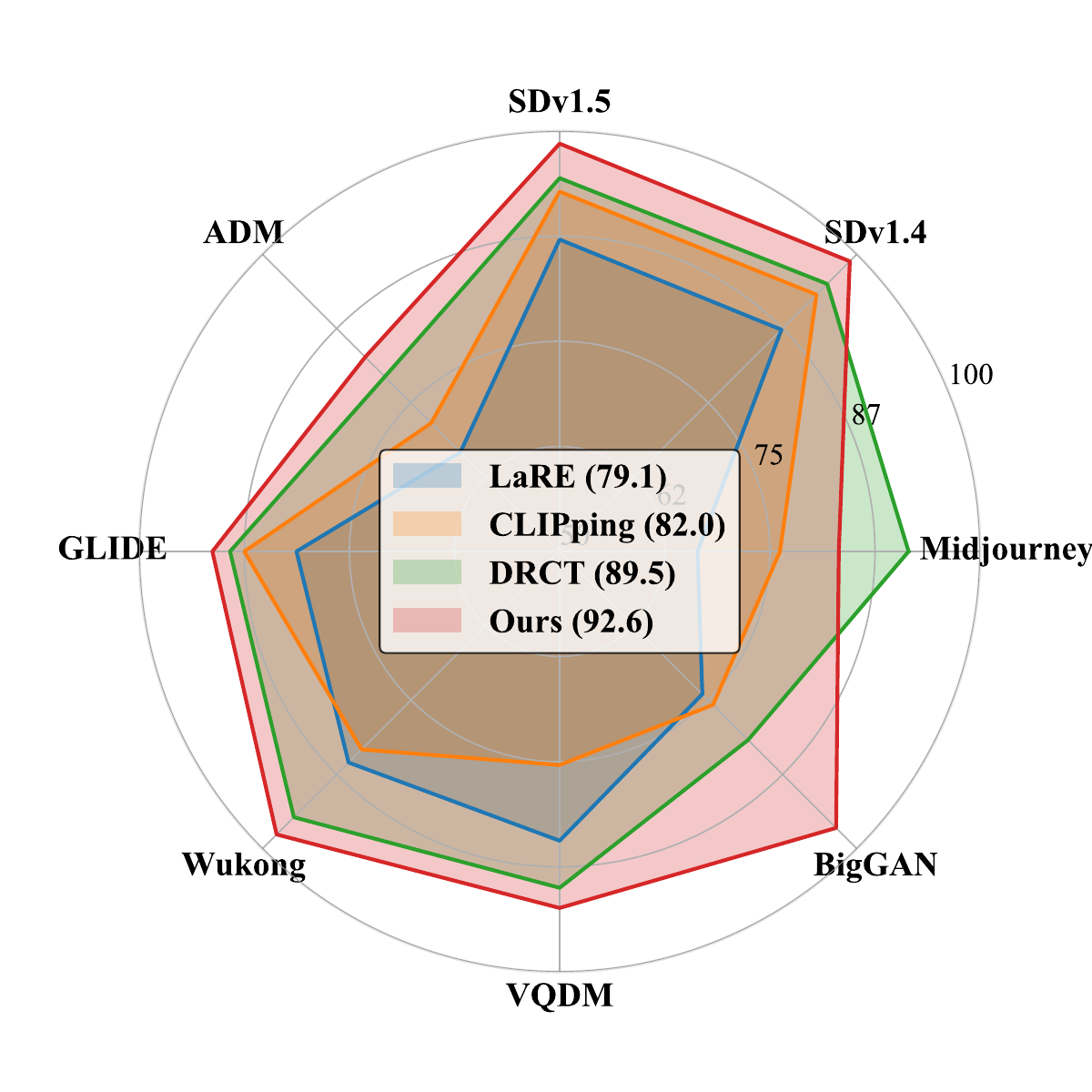}
}
\Description{Comparison of generalization performance between the proposed method and existing detection methods. }
\vspace{-0.2in}
\caption{
Comparison of generalization performance between the proposed method and existing detection methods. 
All detection methods were trained on a dataset consisting of generated images from Stable Diffusion (SD) v1.4 and real images from the MSCOCO dataset. The reported detection accuracies were evaluated on eight subsets of the GenImage dataset. Results demonstrate that the proposed method outperforms all other methods.
}
\label{fig:radar}
\end{center}
\vspace{-0.15in}
\end{figure}

\begin{figure*}[t]
\raggedright{
\begin{minipage}[t]{0.3\linewidth}
    \raggedright
    \includegraphics[page=1, height=4.3cm, keepaspectratio]{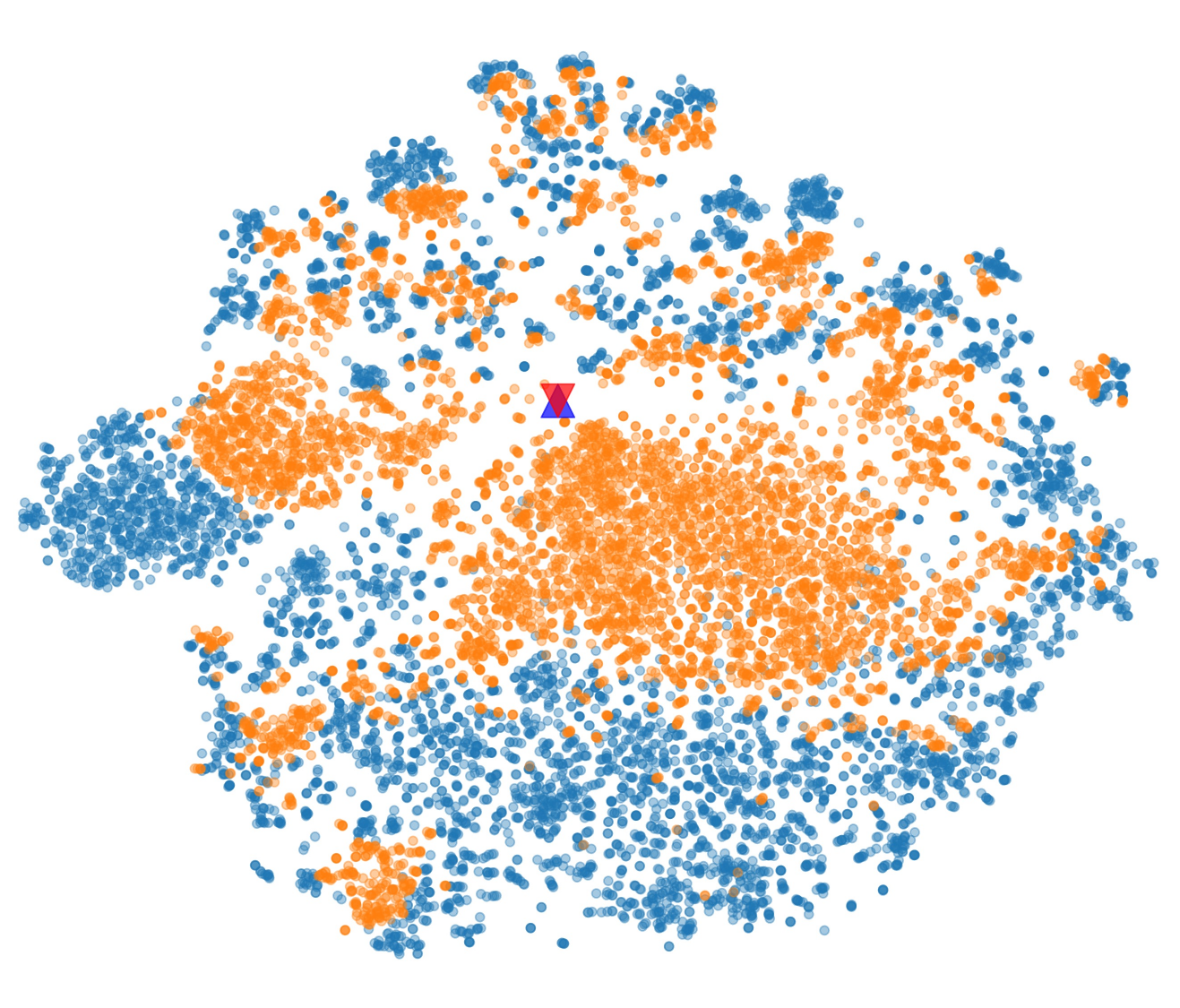}
    \label{fig:tsne_zsclip}
\end{minipage}
\begin{minipage}[t]{0.3\linewidth} 
    \raggedright
    \includegraphics[page=2, height=4.3cm, keepaspectratio]{figures/ICML2025_TSNE.pdf}
    \label{fig:tsne_clipping}
\end{minipage}
\begin{minipage}[t]{0.3\linewidth} 
    \raggedright
    \includegraphics[page=3, height=4.3cm, keepaspectratio]{figures/ICML2025_TSNE.pdf}
    \label{fig:tsne_ours}
\end{minipage}
}
\vspace{-0.1in}
\Description{Visualization of t-SNE embeddings.}
\caption{
{Visualization of t-SNE embeddings.} 
(a) Zero-shot CLIP \cite{DBLP:conf/icml/RadfordKHRGASAM21}, (b) CLIPping \cite{DBLP:conf/mir/KhanD24}, and (c) \methodName~(Ours). 
While CLIPping modifies the zero-shot CLIP text features, \methodName\ instead treats text features as semantic centers, pulling same-class samples closer and pushing different-class samples apart. 
These principles yield the highest detection accuracy. 
All models are trained on images generated by Stable Diffusion 1.4 and tested on BigGAN images from the GenImage dataset.
}
\label{fig:tsne_compare}
\end{figure*}  

Recent advancements in generative models (e.g., Stable Diffusion \cite{DBLP:conf/cvpr/RombachBLEO22}, DALL-E 3 \cite{betker2023improving} and Sora \cite{brooks2024video}) have revolutionized visual content creation, enabling a wide range of applications in digital art, advertising, and entertainment. However, these powerful models also pose risks of misuse \cite{DBLP:conf/nips/LuH0QWLO23}, such as fabricating fake news, manipulating public opinion, and infringing on copyrights. Consequently, developing robust detection methods to distinguish real images from AI-generated ones has become a critical requirement for maintaining a trustworthy cyberspace environment.

A common approach to distinguish real and fake images is training a binary classifier, which performs well on seen generators but often fails on unseen ones. To improve generalization, methods like CNNDet \cite{DBLP:conf/cvpr/WangW0OE20} enhance robustness through data augmentation, while UnivFD \cite{DBLP:conf/cvpr/OjhaLL23} and CLIPping \cite{DBLP:conf/mir/KhanD24} leverage CLIP's feature space with techniques such as prompt learning and linear probing. However, most methods fail to explicitly separate the feature distributions of real and fake images, resulting in overlapping embeddings that hinder accurate classification for unseen generators.

In this work, we address the challenge of generalizing AI generated image detection by leveraging discriminative representation learning \cite{DBLP:journals/tmm/GouYYYY23}. 
Guided by the principles of minimizing intra-class variation and maximizing inter-class separation, our method clusters features within the same class while separating features across different classes. 
This learning objective promotes high cosine similarity within same-class embeddings and lower similarity across different classes, effectively separating feature distributions and enhancing feature discriminability.

Building on these principles, we propose a novel method {M}ult{i}modal Disc{r}iminative Represent{a}tion Learning for {Ge}neralizable AI-generated Image Detection (\methodName), which applies multimodal prompt learning to align visual and semantic representations for robust, generator-agnostic detection. 
By using text embeddings as stable semantic anchors (e.g., ``Real" or ``Fake"), \methodName\ refines discriminative representation learning and improves generalizability. 
Unlike image-only methods, our multimodal design grounds visual features in text-driven semantics, enabling superior generalization.

To demonstrate the effectiveness of our method, \cref{fig:radar} compares the detection accuracies of our proposed \methodName\ against existing methods, including DRCT \cite{DBLP:conf/icml/ChenZYY24}, CLIPping \cite{DBLP:conf/mir/KhanD24}, and LaRE \cite{DBLP:conf/cvpr/LuoDYD24}, across all subsets of the GenImage dataset \cite{DBLP:conf/nips/ZhuCYHLLT0H023}.
\methodName\ achieves a 92.6\% accuracy, surpassing all baselines and highlighting the effectiveness of our multimodal method for improved generalizability.

To further illustrate \methodName's effectiveness in discriminative representation learning, \cref{fig:tsne_compare} compares t-SNE embeddings from CLIP, CLIPping, and \methodName. 
When tested on unseen generators, \methodName\ consistently aligns images with their corresponding text embeddings, maintains distinct class boundaries, and exhibits robust adaptability to domain shifts and unseen generative models.

In summary, our main contributions are:
\begin{itemize}
    \item We introduce \methodName, which applies multimodal feature alignment to foster discriminative representation learning, effectively minimizing intra-class variation and maximizing inter-class separation, thereby enhancing generalizability to unseen generative models.
    \item We perform comprehensive experiments across multiple benchmarks and real-world scenarios, demonstrating that \methodName\ achieves {robust}, {accurate}, and {transferable} performance in AI-generated image detection.
    \item We further {validate \methodName\ on state-of-the-art generators}, including {Sora} \cite{brooks2024video}, DALL-E 3 \cite{betker2023improving} and Infinity \cite{han2024infinityscalingbitwiseautoregressive}, showcasing its effectiveness in handling {previously unseen models} and emphasizing its generalizability.
\end{itemize}

\section{Related Work}
\subsection{AI-generated Images Detection}
In recent years, the rapid advancement of generative models has intensified research on AI-generated image detection, as these models can produce strikingly realistic images that raise concerns over misinformation, privacy, and authenticity. 
Early work often relied on specialized binary classifiers; for instance, CNNDet \cite{DBLP:conf/cvpr/WangW0OE20} directly classifies images as real or fake using a convolutional neural network. 
Several methods focus on frequency-domain analysis to detect inconsistencies \cite{DBLP:conf/wifs/0022KC19, DBLP:conf/eccv/LiuYBXLG22}. 
Others emphasize local artifacts rather than global semantics; 
Patchfor \cite{DBLP:conf/eccv/ChaiBLI20} uses classifiers with limited receptive fields to capture local defects, whereas Fusing \cite{DBLP:conf/icip/JuJKXNL22} adopts a dual-branch design combining global spatial information with carefully selected local patches. 
NPR \cite{DBLP:conf/cvpr/TanLZWGLW24} leverages spatial relations among neighboring pixels, and LGrad \cite{DBLP:conf/cvpr/Tan0WGW23} generates gradient maps using a pre-trained CNN, both strategies targeting low-level artifacts. 
AIDE \cite{DBLP:journals/corr/abs-2406-19435} further integrates multiple experts to extract visual artifacts and noise patterns, selecting the highest and lowest-frequency patches to detect based on low-level inconsistencies.

Another line of research focuses on reconstruction-based detection \cite{DBLP:conf/iccv/WangBZWHCL23, DBLP:conf/cvpr/LuoDYD24}. 
For example, DRCT \cite{DBLP:conf/icml/ChenZYY24} generates hard samples by reconstructing real images through a diffusion model and then applies contrastive learning to capture artifacts.

Recent works have leveraged CLIP-derived features for improved detection, as exemplified by UnivFD \cite{DBLP:conf/cvpr/OjhaLL23}, which trains a classifier in CLIP’s representation space, FAMSeC \cite{DBLP:journals/spl/XuYFLZ25} applies an instance-level, vision-only contrastive objective, and CLIPping \cite{DBLP:conf/mir/KhanD24}, which applies prompt learning and linear probing on CLIP’s encoders. 
While these methods show promise, they still struggle to generalize to unseen models, and focusing on a single modality in CLIP can be suboptimal. 
To address these issues, we propose a method that simultaneously optimizes image and text features using discriminative representation learning, thereby capturing generator-agnostic characteristics and enhancing generalization.

\subsection{Pre-trained Vision-Language Models} 
Recently, large-scale pre-trained models that integrate both image and language modalities have achieved remarkable success, demonstrating robust performance across a variety of tasks \cite{DBLP:journals/pami/ZhangHJL24}. 
These models attract attention for their strong zero-shot capabilities and robustness to distribution shifts. 
Among them, Contrastive Language-Image Pretraining (CLIP) \cite{DBLP:conf/icml/RadfordKHRGASAM21} stands out as a large-scale approach exhibiting exceptional zero-shot ability on tasks such as image classification \cite{DBLP:journals/tfs/ShiLFZ24, DBLP:conf/mm/WangZ0024, DBLP:conf/ijcnn/WangZFL24} and image-text retrieval \cite{DBLP:conf/mm/MaXSYZJ22}. 

Although CLIP demonstrates impressive zero-shot performance, further fine-tuning is often required to reach state-of-the-art accuracy on specific downstream tasks. For instance, on the simple MNIST dataset \cite{DBLP:journals/spm/Deng12}, the zero-shot CLIP model (ViT-B/16) achieved only 55\% accuracy. 
However, fully fine-tuning CLIP on a downstream dataset compromises its robustness to distribution shifts \cite{DBLP:conf/cvpr/WortsmanIKLKRLH22}. 
To address this issue, numerous studies have proposed specialized fine-tuning strategies for CLIP. One example is CoOp \cite{DBLP:journals/ijcv/ZhouYLL22}, which injects learnable vectors into the textual prompt context and optimizes these vectors during fine-tuning while freezing CLIP’s vision and text encoders. 
Nevertheless, focusing solely on the text branch may lead to suboptimal performance. Consequently, MaPLe \cite{DBLP:conf/cvpr/KhattakR0KK23} extends prompt learning to both the vision and language branches, thereby enhancing alignment between these representations. 
Building on MaPLe’s approach, we incorporate our discriminative representation learning on multimodal to address generalization challenges in AI-generated image detection.
A more comprehensive discussion of related work appears in \cref{sec:appendix_relatedwork}.

\section{Preliminaries}\label{Sec::Pre}

\textbf{AI-generated image detection.} Let $\mathcal{S}_{{tr}}^N = \{\mathbf{x}_i^N\}_{i=1}^{n}$ and $\mathcal{S}_{{tr}}^G = \{\mathbf{x}_j^G\}_{j=1}^{m}$ be the training images collected from natural environments and generated by the generative AI models, respectively. 
The combined training set is denoted as $\mathcal{S}_{{tr}}$. 
Following \citet{DBLP:conf/cvpr/WangW0OE20}, the AI-generated image detection task can be defined as follows:
\begin{tcolorbox}[colframe=black!75, colback=gray!10, sharp corners]
\textbf{Problem 1 (AI-generated Image Detection.)} \\
AI-generated image detection aims to learn a detector $\mathbf{f}$ using available resources (e.g., training images $\mathcal{S}_{{tr}}^N$, $\mathcal{S}_{{tr}}^G$ and pre-trained models) such that $\mathbf{f}$ can answer whether a given image $\mathbf{x}$ is natural or AI-generated accurately.
\end{tcolorbox}

Current benchmarks \cite{DBLP:conf/cvpr/WangW0OE20, DBLP:conf/cvpr/OjhaLL23, DBLP:conf/nips/ZhuCYHLLT0H023} typically involve training and validating the detector on images generated by a single known model, followed by testing on images from multiple unseen generative models. This setup emphasizes the generalization challenge, as detectors must adapt to unknown models. 
We leverage the pre-trained CLIP model, which offers rich textual information and a multimodal foundation, as a basis for addressing this challenge.

\textbf{CLIP model} \cite{DBLP:conf/icml/RadfordKHRGASAM21}\textbf{.} 
Given any image $\mathbf{x}$ and label $y\in \mathcal{Y}$, where $\mathcal{Y}=\{\text{Real}, \text{Fake}\}$ in AI-generated image detection, we use CLIP to extract features of $\mathbf{x}$ and $y$ through its image encoder $\mathbf{f}^{\rm img}$ and text encoder $\mathbf{f}^{\rm text}$. Following \cite{DBLP:conf/iclr/Jiang000LZ024}, the extracted image feature $\mathbf{h}\in \mathbb{R}^d$ and text feature $\mathbf{e}_y\in \mathbb{R}^d$ are given by: 
\begin{equation}
\label{eq:pre_feature_extract}
    \mathbf{h} = \mathbf{f}^{\rm img}(\mathbf{x}),~~
    \mathbf{e}_y = \mathbf{f}^{\rm text}({\rm Prompt}(y)),
\end{equation}
where ${\rm Prompt}(y)$ represents the prompt template for the labels, such as ``a photo of a Real'' or ``a photo of a Fake.''

In zero-shot classification, the goal is to predict the correct label for an image without prior task-specific training. 
CLIP performs this prediction by computing the cosine similarity $\langle\cdot,\cdot\rangle$ between the image embedding $\mathbf{h}$ and the text embeddings $\mathbf{e}$. The predicted label $\hat{y}$ is then obtained by selecting the label with the highest similarity:
\begin{equation}
    \hat{y} = \argmax_{y \in \mathcal{Y}} \langle\mathbf{h}, \mathbf{e}_y\rangle.
\end{equation}

\begin{figure*}[!t]
\centering{
\includegraphics[page=1, width=0.85\textwidth]{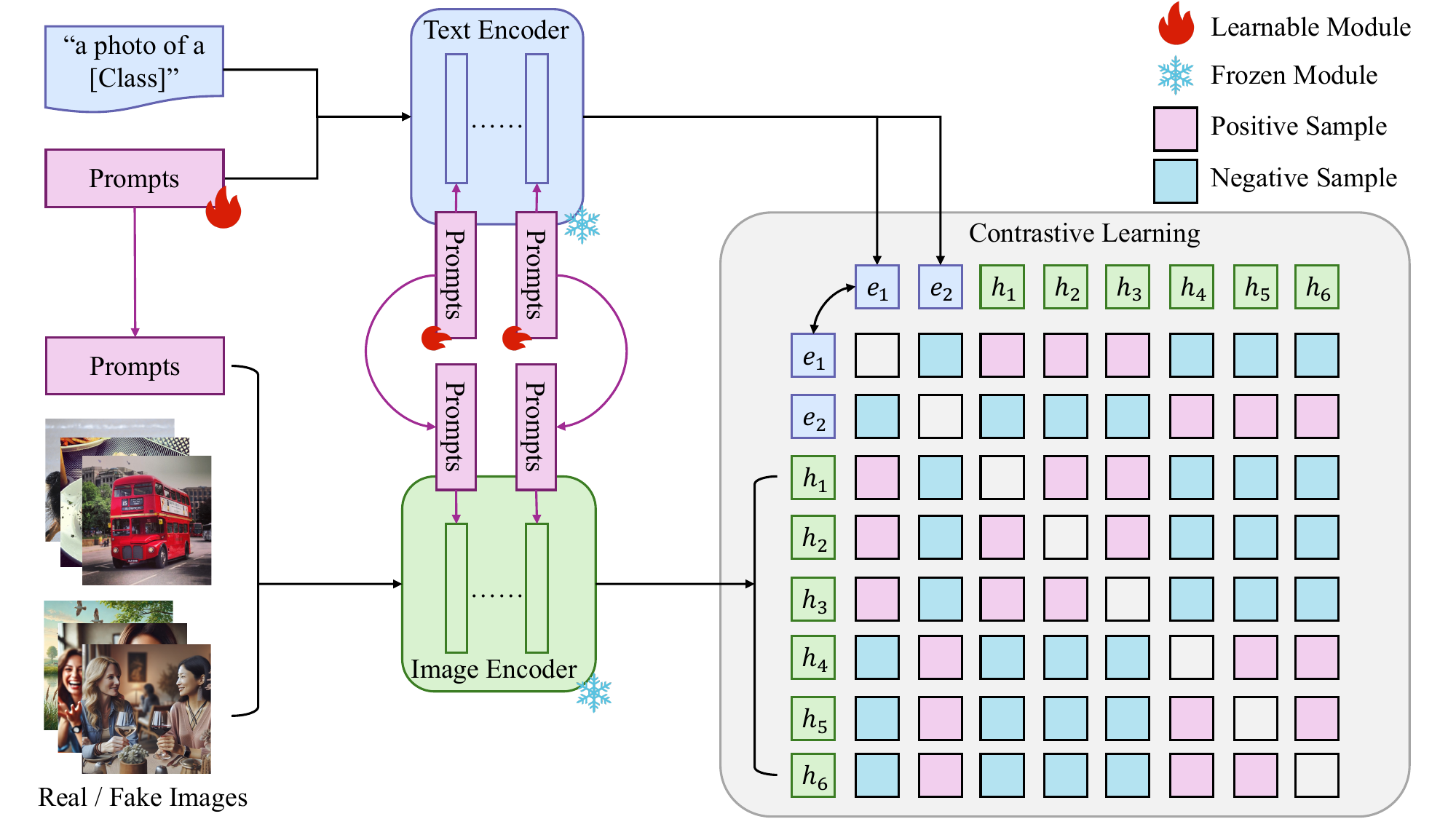}
}
\Description{Overview of our proposed method MiraGe.}
\caption{
{Overview of our proposed method MiraGe.} 
We illustrate two text embeddings, \(\mathbf{e}_1=\mathbf{e}_{\mathrm{Real}}\) and \(\mathbf{e}_2=\mathbf{e}_{\mathrm{Fake}}\), serving as text anchors for real and fake classes, respectively. 
We give example images from the “Real” class are mapped to \(\{\mathbf{h}_1, \mathbf{h}_2, \mathbf{h}_3\}\), and example images from the “Fake” class are mapped to \(\{\mathbf{h}_4, \mathbf{h}_5, \mathbf{h}_6\}\). 
Our multimodal prompt learning injects learnable prompts into both the text and image encoders while keeping the encoders themselves {frozen}.
We then apply discriminative representation learning over all embeddings:
{positive pairs} arise if two embeddings share the same label, e.g.\ \((\mathbf{e}_{\mathrm{Real}},\mathbf{h}_i)\) if \(\mathbf{h}_i\) is real, \((\mathbf{h}_1,\mathbf{h}_2)\) if both are real images, etc., and {negative pairs} if their labels differ. 
By {pulling} positive pairs closer and {pushing} negative pairs apart, \methodName\ achieves greater feature discriminability and robustly adapts to newly emerging generative models.
}
\label{fig:overview}
\end{figure*}

\section{Motivation from Theoretical Observations}
\label{sec:motivation}
Our work is inspired by the theoretical findings in \citet{DBLP:conf/nips/YeXCLLW21}, which identify intra-class variation minimization and inter-class separation as key properties for achieving superior generalization. 
While previous studies have primarily focused on single-modal settings, we leverage the large-scale pretraining of CLIP, which encodes rich cross-domain knowledge and provides robust text and image embeddings. 
In this work, we further refine these theoretical insights to the multimodal setting, exploring their application to the generalization of AI-generated image detection.

Specifically, we designate ``{$\text{Real}$}'' and ``{ $\text{Fake}$}'' as two separate classes and obtain their textual embeddings using the prompt template ``{a photo of a}''.
By aligning each image representation with the corresponding text anchor through multimodal prompt learning, we reduce intra-class variation (i.e., pulling same-class features closer) and {increase inter-class separation} (i.e., pushing different-class features apart). 
Below, we provide our theoretical basis.

\textbf{Notations.} Let $\mathcal{X}$ denote the image space, with $P_X$ representing the distribution defined over $\mathcal{X}$. We use the distribution $P_X$ to model the AI-generated image models and use $Q_X$ to model the natural distribution, which samples natural images. We define $\mathscr{D}_G$ as the set of all available AI-generated image models during testing. Additionally, let $\mathscr{D}_N$ represent the set of natural distributions.

\begin{definition}[CLIP-based Intra-class Variation]\label{def:clip_intra1}
{\itshape
The variation of the CLIP model across distributions $P_X, Q_X$ is 
\begin{equation}
\label{eq:clip_variation}
\begin{split}
&\mathcal{V}_{\rm CLIP}( \mathbf{f}^{\rm img},\mathbf{f}^{\rm text};P_X,Q_X)\\=&\max \big \{\mathcal{V}(  \mathbf{f}^{\rm img},\mathbf{f}^{\rm text};P_X),\mathcal{V}( \mathbf{f}^{\rm img},\mathbf{f}^{\rm text};Q_X) \big \},
\end{split}
\end{equation}
where
\begin{equation*}
\begin{split}
    \mathcal{V}(  \mathbf{f}^{\rm img},\mathbf{f}^{\rm text};P_X) &= \rho\bigl(P_{\mathbf{f}^{\rm img}(X)},\delta_{\mathbf{e}_{\rm Fake}} \bigr),
    \\ \mathcal{V}( \mathbf{f}^{\rm img},\mathbf{f}^{\rm text};Q_X) & = \rho\bigl(Q_{\mathbf{f}^{\rm img}(X)},\delta_{\mathbf{e}_{\rm Real}} \bigr),
    \end{split}
\end{equation*}
here $\rho(\cdot,\cdot)$ is a suitable distance (e.g., Euclidean distance, $1 - \cos(\cdot)$, etc.), and $\delta_{\mathbf{e}_{\rm Fake}}$ and $\delta_{\mathbf{e}_{\rm Real}}$ present the Dirac measures over text anchors $\mathbf{e}_{\rm Fake}$ and ${\mathbf{e}_{\rm Real}}$, respectively.
}
\end{definition}

\begin{definition}[Inter-class Separation]\label{def:clip_inter} 
{\itshape
The separation of CLIP model across $\mathscr{D}_G, \mathscr{D}_N$ is 
\begin{equation}
\mathcal{P}(\mathbf{f}^{\rm img};\mathscr{D}_G,\mathscr{D}_N)\!=\!\min_{P_{X}\in\mathscr{D}_G,Q_{X}\in\mathscr{D}_N}\!\rho\big(P_{\mathbf{f}^{\rm img}(X)},Q_{\mathbf{f}^{\rm img}(X)}\big), \notag
\end{equation}
where $\rho(\cdot,\cdot)$ is a suitable distance defined in Definition \ref{def:clip_intra1}.
}
\end{definition}
A lower $\mathcal{V}_{\textnormal{CLIP}}$ indicates images within the same class are more tightly clustered around their corresponding text anchor, reflecting reduced intra-class variation. 
Conversely, a higher $\mathcal{P}$ suggests greater separation between the clusters of different classes, reflecting enhanced inter-class distinction. 

\begin{definition}[Generalization Error for AI-generated Image Detector]\label{def:clip_err}
{\itshape
Given a detector $\mathbf{f}$ based on the CLIP embedding and trained on training distributions $P_{X_{tr}} \in \mathscr{D}_G$ and $Q_{X_{tr}} \in \mathscr{D}_N$, the generalization error over $\mathscr{D}_G$ and $\mathscr{D}_N$ w.r.t. $\mathbf{f}$ and loss function $\ell: \mathcal{Y}\times \mathcal{Y}\rightarrow \mathbb{R}_{\geq 0}$ is
\begin{equation*}
\begin{split}
\mathrm{err}_{\textnormal{}}(\mathbf{f};\mathscr{D}_G,\mathscr{D}_N) 
=&\max_{P_X\in \mathscr{D}_G} 
\big(\mathbb{E}_{\mathbf{x} \sim P_X} 
\ell\bigl(\mathbf{f}(\mathbf{x}), \text{Fake} \big) 
\\&~~~~~~~~~~- \mathbb{E}_{\mathbf{x} \sim P_{X_{tr}}} 
\ell\bigl(\mathbf{f}(\mathbf{x}), \text{Fake} \big) \big) 
\\
+&\max_{Q_X\in \mathscr{D}_N} 
\big(
\mathbb{E}_{\mathbf{x} \sim Q_X} 
\ell\bigl(\mathbf{f}(\mathbf{x}), \text{Real} \big) 
\\&~~~~~~~~~~- \mathbb{E}_{\mathbf{x} \sim Q_{X_{tr}}} 
\ell\bigl(\mathbf{f}(\mathbf{x}), \text{Real} \big) \big).
\end{split}
\end{equation*}
}
\end{definition}
The generalization error $\mathrm{err}_{\textnormal{}}(\mathbf{f};\mathscr{D}_G,\mathscr{D}_N)$ measures how much the worst-case error on any unseen generative model exceeds that of the training distributions.

\begin{theorem}[Generalization Error Upper Bound]\label{T4.4} If the loss $\ell$ is upper bounded, for a learnable generalization with sufficient inter-class separation defined in Definition \ref{def:clip_inter}, then the generalization error over $\mathscr{D}_G$ and $\mathscr{D}_N$ w.r.t. ${\mathbf{f}}$ (here $\mathbf{f}$ is the detector based on the outputs of $\mathbf{f}^{{\rm img}}$ and $\mathbf{f}^{{\rm text}}$) is
\begin{equation}
    \begin{split}
&\mathrm{err}_{\textnormal{}}(\mathbf{f};\mathscr{D}_G,\mathscr{D}_N) 
\\ \leq & O\Big( \big( \mathcal{V}_{\rm CLIP}^{\rm sup}( \mathbf{f}^{{\rm img}},\mathbf{f}^{{\rm text}};P_{X_{tr}},Q_{X_{tr}} \big)^{\frac{\alpha^2}{(\alpha+d)^2}} \Big),
\end{split}
\end{equation}
for some $\alpha>0$. Here, $d$ denotes the output dimension of $\mathbf{f}^{{\rm img}}$, and
\begin{equation*}
\begin{split}
    &\mathcal{V}_{\rm CLIP}^{\rm sup}( \mathbf{f}^{{\rm img}},\mathbf{f}^{{\rm text}};P_{X_{tr}},Q_{X_{tr}} \big)\\ = &\sup_{\beta\in \mathbb{S}^{d-1}} \mathcal{V}_{\rm CLIP}( \beta^{\top}\mathbf{f}^{{\rm img}},\beta^{\top}\mathbf{f}^{{\rm text}};P_{X_{tr}},Q_{X_{tr}} \big)
    \end{split}
\end{equation*}
is the inter-class variation, here $\mathbb{S}^{d-1}$ is the unit hypersphere defined over $\mathbb{R}^d$.
\end{theorem}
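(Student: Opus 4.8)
The plan is to reduce the statement to the out-of-distribution generalization bound of \citet{DBLP:conf/nips/YeXCLLW21} by exhibiting, in the multimodal CLIP setting, the text anchors $\mathbf{e}_{\rm Real},\mathbf{e}_{\rm Fake}$ as playing exactly the role that class-conditional feature means play in the single-modal theory. First I would bound the generalization error by worst-case feature-distribution discrepancies: since $\mathbf{f}$ depends on $\mathbf{x}$ only through the CLIP embedding $\mathbf{f}^{\rm img}(\mathbf{x})$ together with the fixed text embeddings, and $\ell$ is bounded, each of the two differences in $\mathrm{err}(\mathbf{f};\mathscr{D}_G,\mathscr{D}_N)$ is an integral-probability-metric expression over the pushforward laws. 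On the generative side, $\mathbb{E}_{\mathbf{x}\sim P_X}\ell(\mathbf{f}(\mathbf{x}),\text{Fake})-\mathbb{E}_{\mathbf{x}\sim P_{X_{tr}}}\ell(\mathbf{f}(\mathbf{x}),\text{Fake})$ is bounded, up to a constant depending on the regularity of $\ell$, by $\rho(P_{\mathbf{f}^{\rm img}(X)},P_{\mathbf{f}^{\rm img}(X_{tr})})$; inserting the Dirac measure $\delta_{\mathbf{e}_{\rm Fake}}$ and applying the triangle inequality for $\rho$ bounds this by $\mathcal{V}(\mathbf{f}^{\rm img},\mathbf{f}^{\rm text};P_X)+\mathcal{V}(\mathbf{f}^{\rm img},\mathbf{f}^{\rm text};P_{X_{tr}})$, and the natural side is symmetric through $\delta_{\mathbf{e}_{\rm Real}}$. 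Taking suprema over $\mathscr{D}_G,\mathscr{D}_N$ and using \cref{eq:clip_variation} shows $\mathrm{err}(\mathbf{f};\mathscr{D}_G,\mathscr{D}_N)$ is controlled by $\sup_{P_X\in\mathscr{D}_G,Q_X\in\mathscr{D}_N}\mathcal{V}_{\rm CLIP}(\mathbf{f}^{\rm img},\mathbf{f}^{\rm text};P_X,Q_X)$ plus the training-time variation.

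The next step is where the hypothesis ``learnable generalization with sufficient inter-class separation'' is consumed. Following \citet{DBLP:conf/nips/YeXCLLW21}, sufficient separation (\cref{def:clip_inter}) between every admissible pair $P_{\mathbf{f}^{\rm img}(X)},Q_{\mathbf{f}^{\rm img}(X)}$ guarantees an expansion function $s$ — nondecreasing, with $s(0)=0$, continuous at $0$, and with polynomial growth near the origin governed by a parameter $\alpha>0$ — such that $\mathcal{V}_{\rm CLIP}(\mathbf{f}^{\rm img},\mathbf{f}^{\rm text};P_X,Q_X)\le s\big(\mathcal{V}_{\rm CLIP}(\mathbf{f}^{\rm img},\mathbf{f}^{\rm text};P_{X_{tr}},Q_{X_{tr}})\big)$ for all test pairs. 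Combining this with the first step already bounds $\mathrm{err}$ by $s$ of the training variation up to constants. It then remains to lift the one-dimensional analysis of $s$ to $\mathbb{R}^d$: I would cover the unit hypersphere $\mathbb{S}^{d-1}$ by an $\epsilon$-net of cardinality $O(\epsilon^{-d})$, control the scalar variation along each net direction $\beta$ by $\mathcal{V}_{\rm CLIP}^{\rm sup}(\mathbf{f}^{\rm img},\mathbf{f}^{\rm text};P_{X_{tr}},Q_{X_{tr}})$ via \cref{def:clip_intra1} applied to $\beta^{\top}\mathbf{f}^{\rm img}$, pay an $O(\epsilon)$ discretization cost, and then balance the net radius $\epsilon$ against the polynomial exponent of $s$; optimizing this two-parameter trade-off yields exactly the rate $\big(\mathcal{V}_{\rm CLIP}^{\rm sup}(\mathbf{f}^{\rm img},\mathbf{f}^{\rm text};P_{X_{tr}},Q_{X_{tr}})\big)^{\alpha^2/(\alpha+d)^2}$.

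The hard part will not be the loss-to-distance reduction of the first step, which is routine once $\ell$ is bounded, but the last two: I must verify that $\mathcal{V}_{\rm CLIP}$ — whose ``center'' is produced by a \emph{separate} text encoder rather than being an empirical feature mean — still satisfies the structural conditions under which an expansion function exists, in particular that the quantity in \cref{def:clip_inter} is the right notion of separation, and then carry out the covering-number optimization carefully enough to recover the exact exponent $\frac{\alpha^2}{(\alpha+d)^2}$ rather than a weaker power of the variation. I expect the cleanest route is to phrase \cref{def:clip_intra1,def:clip_inter} so that they instantiate the hypotheses of the corresponding theorem in \citet{DBLP:conf/nips/YeXCLLW21} verbatim — with the text-anchor-for-feature-mean substitution justified by a short lemma — and then invoke that theorem.
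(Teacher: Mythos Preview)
Your approach is essentially the paper's: the triangle-inequality reduction through the Dirac text anchors is exactly the paper's \cref{lemma_variation_bound}, and your closing remark that the cleanest route is to instantiate the hypotheses of \citet{DBLP:conf/nips/YeXCLLW21} and invoke their theorem is precisely what the paper does. The one detail worth flagging is \emph{how} the paper arranges that invocation: rather than re-running the $\epsilon$-net covering argument you sketch, it builds an explicit available-environment set $\mathcal{E}_{\rm avail}$ containing both the training distributions $P_{X_{tr}},Q_{X_{tr}}$ and the Dirac-at-text-anchor distributions $\delta_{\mathbf{e}_{\rm Fake}},\delta_{\mathbf{e}_{\rm Real}}$ (each packaged as a joint law over $\mathcal{X}\times\{\text{Fake},\text{Real}\}$ with an arbitrary class-prior $\alpha$), so that Ye et al.'s Theorem~4.1 applies verbatim and the exponent $\alpha^2/(\alpha+d)^2$ is inherited without any covering-number bookkeeping on your side.
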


Theorem \ref{T4.4} emphasizes that achieving both high inter-class separation (ensuring distinguishability) and low intra-class variation (minimizing generalization error) is key to attaining generalizable AI-generated image detection.
For more details of the theoretical analysis are provided in \cref{appendix:theory}.

\section{Methodology}
This section describes our approach in two parts:
first, we introduce our discriminative representation learning method,
followed by the multimodal prompt learning.
The overview of our proposed method is shown in \cref{fig:overview}.

\subsection{Discriminative Representation Learning}
To achieve discriminative representation learning, we enhance a supervised contrastive loss \cite{DBLP:conf/nips/KhoslaTWSTIMLK20} by incorporating a multimodal context to encourage tighter clustering of same-class samples.

Given a batch $\{\mathbf{x}_1,...,\mathbf{x}_I\}$ from the training data, we define a multimodal embedding set $\mathcal{H}$ w.r.t. the batch as
\begin{equation*}
    \mathcal{H} = \{\mathbf{h}_{-1},\mathbf{h}_0,\mathbf{h}_1,...,\mathbf{h}_I\}, \quad 
\end{equation*}
where $
    \mathbf{h}_{-1} = \mathbf{e}_{\rm Real}, \mathbf{h}_{0} = \mathbf{e}_{\rm Fake}
$ and $
    \mathbf{h}_i = \mathbf{f}^{\rm img}(\mathbf{x}_i)
$ for any $i>0$. 
We let \(\mathcal{I} = \{-1,0,1,\dots,I\}\) be the corresponding index set. 
For each \(i \in \mathcal{I}\), we define
\begin{align}
    A(i) = \mathcal{I} \setminus \{i\}, ~~~
    P(i) = \bigl\{p \in A(i)\mid y_p = y_i\bigr\}, \notag
\end{align}
where $A(i)$ includes all other indices excluding $i$ itself, and $P(i)$ gathers the {positive} indices that share the same label. 

By incorporating text anchors into \(\mathcal{H}\), we unify vision and language in a single discriminative loss $\mathcal{L}_{\textnormal{dis}}$:
\begin{align}
  - \sum_{i \in \mathcal{I}} 
 \frac{1}{|P(i)|}\sum_{p \in P(i)} 
      \log \Biggl(
      \frac{\exp\bigl(\langle \mathbf{h}_i,\,\mathbf{h}_p\rangle/\tau\bigr)}
           {\displaystyle\sum_{j \in A(i)} \exp\bigl(\langle \mathbf{h}_i,\mathbf{h}_j\rangle /\tau\bigr)}
      \Biggr), 
\label{eq_dis}
\end{align}
where $\mathbf{h}_i, \mathbf{h}_p \in \mathcal{H}$, $\tau$ is a temperature hyperparameter, and $\langle\cdot,\cdot\rangle$ denotes cosine similarity. 

By {pulling} same-class image embeddings together and {attracting} them to their corresponding text anchor, the discriminative loss effectively reduces {intra-class variation}. 
In particular, for an image embedding \(\mathbf{h}_i\) (\(i>0\) with label \(y_i\)), the set \(P(i)\) contains other images of label \(y_i\) as well as the text anchor \(\mathbf{e}_{y_i}\). By minimizing \(\mathcal{L}_{\textnormal{dis}}\), we effectively {maximize} the similarity \(\langle \mathbf{h}_i,\mathbf{h}_p\rangle\) for all \(p \in P(i)\), thereby pulling these embeddings closer in the feature space.

\textbf{Inter-class separation.}
In addition to reducing intra-class variation, inter-class separation naturally arises from the denominator in \cref{eq_dis}. 
Each embedding $\mathbf{h}_i \in \mathcal{H}$ attempts to {boost} its similarity with $\mathbf{h}_p$, which is defined as the positive embedding whose index \(p \in P(i)\), via the ratio
\begin{align}
    \frac{\exp\bigl(\langle \mathbf{h}_i,\mathbf{h}_p\rangle/\tau \bigr)}
         {\sum_{j\in A(i)} \exp\bigl(\langle \mathbf{h}_i,\mathbf{h}_j\rangle/\tau \bigr)}, \notag
\end{align}
where the denominator covers {all} other embeddings, including negatives. Minimizing \(\mathcal{L}_{\textnormal{dis}}\) enforces
\begin{equation}
\begin{aligned}
\exp\bigl(\langle \mathbf{h}_i,\mathbf{h}_p\rangle/\tau \bigr) 
&\gg  
\exp\bigl(\langle \mathbf{h}_i,\mathbf{h}_n\rangle/\tau \bigr), 
\end{aligned}
\end{equation}
where $\mathbf{h}_n$ denotes the embedding of a {negative sample} with $y_n \neq y_i$ (i.e., $\mathbf{h}_n$ belongs to a different class).
If any \(\mathbf{h}_n\) exhibits high similarity \(\langle \mathbf{h}_i,\mathbf{h}_n\rangle\), it will shrink this ratio and consequently {increase} the loss. 
Hence, once the loss is minimized, different-class pairs must exhibit lower similarity than same-class pairs, ensuring that \(\langle \mathbf{h}_i,\mathbf{h}_p\rangle\) remains large and \(\langle \mathbf{h}_i,\mathbf{h}_n\rangle\) remains smaller, thereby reducing intra-class variation and increasing inter-class separation.
A more detailed discussion can be seen in \cref{appendix:seperation}.

\textbf{Overall objective.}
For an input image \(\mathbf{x}\) with image embedding 
\(\mathbf{h} = \mathbf{f}^{\rm img}(\mathbf{x})\) 
and text embeddings \(\mathbf{e}_y\) for \(y \in \mathcal{Y}\), 
the predicted probability of label \(\hat{y}\) is
\begin{align*}
{p}(\hat{y}|\mathbf{x})
&=\;
\frac{\exp\bigl(\langle \mathbf{h}, \mathbf{e}_{\hat{y}}\rangle /\tau \bigr)}
     {\displaystyle 
      \sum_{y\in \mathcal{Y}}
      \exp\bigl(\langle \mathbf{h}, \mathbf{e}_y\rangle /\tau \bigr)}.
\end{align*}
Then, the cross-entropy loss is
\begin{align}
\mathcal{L}_\text{ce} 
&= -\sum_{\mathbf{x}\in \mathcal{S}_{{tr}}, \hat{y}\in \mathcal{Y}} \mathbf{1}(\mathbf{x},\hat{y}) \log {p} (\hat{y}|\mathbf{x}),
\end{align}
where $\mathcal{S}_{{tr}}$ is the training data introduced in Section \ref{Sec::Pre} and $\mathbf{1}(\mathbf{x},\hat{y})=1$ if and only if the label of $\mathbf{x}$ is $\hat{y}$; otherwise, $\mathbf{1}(\mathbf{x},\hat{y})=0$.
Finally, our overall objective is
\begin{align}
\min_{\bm{\theta}^{\rm text},\bm{\theta}^{\rm img}}\mathcal{L}
\;=\;
\mathcal{L}_\mathrm{ce}
\;+\;
\alpha\,\mathcal{L}_\mathrm{dis},
\end{align}
where $\bm{\theta}^{\rm text}$ and $\bm{\theta}^{\rm img}$ are the learnable parameters w.r.t. the text encoder $\mathbf{f}^{\rm text}$ and image encoder $\mathbf{f}^{\rm img}$, respectively, and \(\alpha\) is the hyper-parameter to balance the contributions of the cross-entropy loss and the discriminative loss \(\mathcal{L}_\mathrm{dis}\). 

Note that optimizing all parameters in $\mathbf{f}^{\rm text}$ and $\mathbf{f}^{\rm img}$ can be computationally expensive. Therefore, in Section \ref{Sec::MPL}, we will describe how to develop our learnable parameters $\bm{\theta}^{\rm text}$ and $\bm{\theta}^{\rm img}$ to achieve efficient and effective optimization.

\begin{table*}[t]
    \centering
    \caption{
    Comparison of accuracy (\%) between our method and others. All methods were trained on the GenImage SDv1.4 dataset and evaluated across different testing subsets. The best results are highlighted in \textbf{bold}, and the second-best are \underline{underlined}.
    }
    {
    \begin{tabular}{lccccccccc}
        \toprule
        {Method} & {Midjourney} & {SDv1.4} & {SDv1.5} & {ADM} & {GLIDE} & {Wukong} & {VQDM} & {BigGAN} & {Avg (\%)} \\
        \midrule
        CNNDet \cite{DBLP:conf/cvpr/WangW0OE20}	& 52.8 	& 96.3 	& \underline{99.5} & 50.1 	& 39.8 	& 78.6 	& 53.4 	& 46.8  & 64.7  \\
        DIRE \cite{DBLP:conf/iccv/WangBZWHCL23} & 50.4 	& \textbf{100.0} & \textbf{99.9} & 52.5 	& 62.7 	& 56.5 	& 52.4 	& 59.5  & 71.2 \\
        UnivFD \cite{DBLP:conf/cvpr/OjhaLL23}   & \textbf{91.5} 	& 96.4 	& 96.1 	& 58.1 	& 73.4 	& {94.5} 	& 67.8 	& 57.7  & 79.4 \\
        CLIPping \cite{DBLP:conf/mir/KhanD24}	& 76.2 	& 93.2 	& 92.8 	& 71.6 	& 87.5 	& 83.3 	& 75.4 	& 75.8  & 82.0 \\
        De-fake \cite{DBLP:conf/ccs/ShaLYZ23}   & 79.9 	& 98.7 	& 98.6 	& 71.6 	& 70.9 	& 78.3 	& 74.4 	& \underline{84.7}  & {84.7} \\
        LaRE \cite{DBLP:conf/cvpr/LuoDYD24}	    & 74.0 	& \textbf{100.0} 	& \textbf{99.9} 	& 61.7 	& 88.5 	& \textbf{100.0} & \textbf{97.2} 	& 68.7  & 86.2 \\
        DRCT \cite{DBLP:conf/icml/ChenZYY24}    & \textbf{91.5} 	& 95.0 	& 94.4 	& \underline{79.4} & \underline{89.2} & {94.7} 	& {90.0} & {81.7} & \underline{89.5} \\
        \midrule
        \rowcolor[HTML]{EFEFEF} \methodName\ (Ours) & \underline{83.2} 	& \underline{98.8} 	& 98.5 	& \textbf{82.7} 	& \textbf{91.3} 	& \underline{97.6} 	& \underline{92.4} 	& \textbf{96.5} & \textbf{92.6} \\
        \bottomrule
    \end{tabular}
    \label{tab:Performance_Comparison_on_GenImage}
    }
\end{table*}

\subsection{Multimodal Prompt Learning}\label{Sec::MPL}
To achieve efficient and effective optimization, we apply multimodal prompt learning that introduces additional learnable embeddings to jointly adapt both visual and textual branches, while freezing original text and image encoders.

\textbf{Deep text prompt learning.}
The text encoder $\mathbf{f}^{\mathrm{text}}$ comprises a word embedding layer $\mathbf{f}_0^{\mathrm{text}}$ followed by $L$ transformer layers $\mathbf{f}_i^{\mathrm{text}}$ for $1 \leq i \leq L$. Given a prompt ${\rm Prompt}$ containing $N$ words, each word ${\rm Prompt}_j$ ($1 \leq j \leq N$) is converted into a $d$-dimensional word embedding by
\[
\mathbf{w}_0^j = \mathbf{f}_0^{\mathrm{text}}({\rm Prompt}_j).
\]
Then, we use $\mathbf{w}_0^j$ to form the initial word embedding matrix
\[
\mathbf{W}_0 = [\,\mathbf{w}_0^1,\,\mathbf{w}_0^2,\dots,\mathbf{w}_0^N\,] 
\;\in\; \mathbb{R}^{d \times N}.
\]
At each transformer layer $\mathbf{f}_i^{\mathrm{text}}$, the word embedding matrix $\mathbf{W}_{i-1}$ from the previous layer is updated as
\[
\mathbf{W}_i = \mathbf{f}_i^{\mathrm{text}}\bigl(\mathbf{W}_{i-1}\bigr).
\]
To facilitate deep text prompt learning, we introduce $B$ additional learnable word embeddings denote as $\bm{\theta}_i = [\bm{\theta}_i^1, \bm{\theta}_i^2, \dots, \bm{\theta}_i^B]\in \mathbb{R}^{d \times B}$ for each transformer layer $\mathbf{f}_i^{\mathrm{text}}$.
The new input at each transformer layer $\mathbf{f}_i^{\mathrm{text}}$ becomes
\begin{align} \label{eq_maple_t}
\mathbf{W}_i = \mathbf{f}_i^{\mathrm{text}}\bigl([\bm{\theta}_{i},\mathbf{W}_{i-1}]\bigr),
\end{align}
where $[\cdot,\cdot]$ denotes concatenation. After processing through all $L$ transformer layers, the final word embedding matrix is $\mathbf{W}_L = [\,\mathbf{w}_L^1,\,\mathbf{w}_L^2,\dots,\mathbf{w}_L^N\,]$, $\mathbf{w}_L^N$ is further projected via a linear layer, denoted as \texttt{TextProj}, to obtain the text embedding $\mathbf{e}$:
\[
\mathbf{e} = \texttt{TextProj}(\mathbf{w}_L^N).
\]
Our learnable parameters $\bm{\theta}^{\rm text}$ are set to $\{\bm{\theta}_i\}_{i=i}^L$, representing the learnable embeddings across all transformer layers. 
While keeping the pre-trained text encoder frozen, our deep prompt learning enables efficient optimization, reduces computational overhead, and effectively tailors the text representation to task-specific contexts.

\textbf{Deep vision prompt learning.}
Similar to the text encoder, the image encoder $\mathbf{f}^{\rm img}$ consists of a patch embedding layer $\mathbf{f}^{\rm img}_0$ and $L$ transformer layers $\mathbf{f}^{\rm img}_i$ for $1 \leq i \leq L$. 
For simplicity, we let the vision and text encoders align the depth for easier coupling.  
We first split an input image $\mathbf{x}$ into $M$ fixed-size patches, each patch ${\rm Patch}_j$ ($1 \leq j \leq M$) is first projected into a $d$-dimensional patch embedding by
\[
\mathbf{z}_0^j=\mathbf{f}^{\rm img}_0({\rm Patch}_j).
\]
Then, we use $\mathbf{z}_0^j$ to form the initial patch embedding matrix
$\mathbf{E}_0 =[\,\mathbf{z}_0^1,\,\mathbf{z}_0^2,\dots,\mathbf{z}_0^M\,] \in \mathbb{R}^{d \times M}$. 
$\mathbf{E}_0$ along with an extra class embedding $\mathbf{c}_0$ are then processed sequentially by the $L$ transformer layers. 
Concretely, at the $i^{\text{th}}$ layer, the previous layer outputs $[\mathbf{c}_{i-1}, \mathbf{E}_{i-1}]$ are passed to the transformer layer $\mathbf{f}^{\rm img}_i$ to yield updated embeddings:
\begin{align}
[\mathbf{c}_i, \mathbf{E}_i] &= \mathbf{f}^{\rm img}_i([\mathbf{c}_{i-1}, \mathbf{E}_{i-1}]). \notag
\end{align}
Following \citet{DBLP:conf/cvpr/KhattakR0KK23}, we argue that prompt learning should {simultaneously} adapt both the vision and language branches for optimal context optimization. 
We achieve multimodal coupling by mapping the learnable word embeddings $\{\bm{\theta}_i\}_{i=i}^L$ into vision embeddings $\tilde{\bm{\theta}}_i$ using a linear mapping function \(\mathcal{F}_i(\cdot)\) with learnable parameters $\bm{\theta}^{\mathcal{F}}_i$,
\[\tilde{\bm{\theta}}_i = \mathcal{F}_i(\bm{\theta}_i;\bm{\theta}^{\mathcal{F}}_i).\]
$\tilde{\bm{\theta}}_i$ are further concatenated with the outputs \([\mathbf{c}_{i-1}, \mathbf{E}_{i-1}]\) from the previous transformer layer.
The new input at each transformer layer $\mathbf{f}_i^{\mathrm{img}}$ becomes
\begin{align} \label{eq_maple_v}
[\mathbf{c}_i, E_i] = \mathbf{f}_i^{\mathrm{img}}([\mathbf{c}_{i-1}, \mathbf{E}_{i-1}, \tilde{\bm{\theta}}_i]). 
\end{align}
After processing through all $L$ transformer layers, 
the final class embedding $\mathbf{c}_{L}$ is projected via a linear layer \texttt{ImageProj} to obtain the image embedding $\mathbf{h}$:
\begin{align}
\mathbf{h} &= \texttt{ImageProj}(\mathbf{c}_{L}). \notag
\end{align}
Our learnable parameters $\bm{\theta}^{\rm img}$ are set to $\{\bm{\theta}^{\mathcal{F}}_i\}_{i=i}^L$, representing the learnable parameters in the mapping function \(\mathcal{F}_i(\cdot)\). 
This explicit mapping $\tilde{\bm{\theta}}_i = \mathcal{F}_i(\bm{\theta}_i;\bm{\theta}^{\mathcal{F}}_i)$ fosters a shared embedding space across both branches, ensuring improved mutual synergy in task-relevant context learning.
By freezing the original encoders and introducing multimodal prompts, our method reduces trainable parameters, preserves CLIP's generalization, and enables joint text-image updates to effectively support discriminative representation learning.

\section{Experiments}
We begin by introducing the datasets and experimental setup, followed by a comparison of \methodName\ with baseline methods. Lastly, we provide additional analyses for further evaluation.

\begin{table*}[t!]
	\caption{
    Performance on the UniversalFakeDetect dataset, evaluated with mean Average Precision (mAP). Methods were trained on ProGAN and tested on various subsets. 
    The best results are highlighted in \textbf{bold}, and the second-best are \underline{underlined}.
    }	\label{tab:Performance_Comparison_on_UniversalFakeDetect_ap}
	{
		\centering
            \setlength{\tabcolsep}{2pt} 
            \renewcommand{\arraystretch}{1.15} 
		\resizebox{1.\linewidth}{!}{
			\begin{tabular}{cc cccccc c cc cc c ccc ccc c c}
				\toprule
				
				\multirow{2}{*}{\shortstack[c]{Detection\\Method}}  & \multicolumn{6}{c}{Generative Adversarial Networks} &\multirow{2}{*}{\shortstack[c]{Deep\\Fakes}} & \multicolumn{2}{c}{Low Level Vision} & \multicolumn{2}{c}{Perceptual Loss} &\multirow{2}{*}{Guided} & \multicolumn{3}{c}{LDM} & \multicolumn{3}{c}{Glide} & \multirow{2}{*}{DALL-E} & Total \\
				\cmidrule(lr){2-7} \cmidrule(lr){9-10} \cmidrule(lr){11-12} \cmidrule(lr){14-16} \cmidrule(lr){17-19} \cmidrule(lr){21-21}

				& \shortstack[c]{Pro-\\GAN} & \shortstack[c]{Cycle-\\GAN} & \shortstack[c]{Big-\\GAN} & \shortstack[c]{Style-\\GAN} & \shortstack[c]{Gau-\\GAN} &  \shortstack[c]{Star-\\GAN}   &  & SITD & SAN & CRN & IMLE & & \shortstack[c]{200\\Steps} & \shortstack[c]{200\\w/ CFG} & \shortstack[c]{100\\Steps} & \shortstack[c]{100\\27} & \shortstack[c]{50\\27} & \shortstack[c]{100\\10} & & \shortstack[c]{mAP (\%)}
				\\ 
				\midrule	
{\shortstack[c]{Spec}}  
& 55.4 & \textbf{100.0} & 75.1 & 55.1 & 66.1 & \textbf{100.0} & 45.2 & 47.5 & 57.1 & 53.6 & 51.0 & 57.7 & 77.7 & 77.3 & 76.5 & 68.6 & 64.6 & 61.9 & 67.8 & 66.2\\
{\shortstack[c]{Patchfor}} 
& 80.9 & 72.8 & 71.7 & 85.8 & 66.0 & 69.3 & 76.6 & 76.2 & 76.3 & 74.5 & 68.5 & 75.0 & 87.1 & 86.7 & 86.4 & 85.4 & 83.7 & 78.4 & 75.7 & 77.7\\
{\shortstack[c]{Co-occurence}}
& \underline{99.7} & 81.0 & 50.6 & 98.6 & 53.1 & 68.0 & 59.1 & 69.0 & 60.4 & 73.1 & 87.2 & 70.2 & 91.2 & 89.0 & 92.4 & 89.3 & 88.4 & 82.8 & 81.0 & 78.1\\
{\shortstack[c]{CNNDet}}  
& \textbf{100.0} & 93.5 & 84.5 & \underline{99.5} & 89.5 & 98.2 & 89.0 & 73.8 & 59.5 & \underline{98.2} & 98.4 & 73.7 & 70.6 & 71.0 & 70.5 & 80.7 & 84.9 & 82.1 & 70.6 & 83.6\\
{\shortstack[c]{DIRE}}
& \textbf{100.0} & 76.7 & 72.8 & 97.1 & 68.4 & \textbf{100.0} & \textbf{98.6} & 54.5 & 65.6 & 97.1 & 93.7 & 94.3 & 95.2 & \underline{95.4} & 95.8 & 96.2 & 97.3 & 97.5 & 68.7 & 87.6\\
{\shortstack[c]{UnivFD}}
& \textbf{100.0} & 99.5 & \underline{99.6} & 97.2 & \textbf{100.0} & {99.6} & 82.5 & 61.3 & \underline{79.0} & 96.7 & \underline{99.0} & 87.8 & \underline{99.1} & 92.2 & \underline{99.2} & 94.7 & 95.3 & 94.6 & 97.2 & 93.4\\
{\shortstack[c]{CLIPping}}
& \textbf{100.0} & \underline{99.9} & 99.4 & \underline{99.5} & \textbf{100.0} & \textbf{100.0} & 92.6 & \underline{81.0} & 72.5 & 91.9 & 98.7 & \textbf{97.4} & 98.9 & 94.3 & 99.1 & \underline{98.9} & \underline{99.3} & \underline{99.0} & \underline{98.9} & \underline{95.9}\\
\hline
\rowcolor[HTML]{EFEFEF}{\shortstack[c]{\methodName\ (Ours)}}
& \textbf{100.0} & \textbf{100.0} & \textbf{99.9} & \textbf{99.8} & \underline{99.9} & \underline{99.9} & \underline{96.0} & \textbf{93.9} & \textbf{84.7} & \textbf{99.9}
& \textbf{100.0} & \underline{96.4} & \textbf{99.9} & \textbf{99.1} & \textbf{99.9} & \textbf{99.8} & \textbf{99.7} & \textbf{99.8} & \textbf{99.9} & \textbf{98.3} \\

				\bottomrule
		\end{tabular}}
	}
\end{table*}

\begin{table*}[t!]
	\caption{
    Performance on the UniversalFakeDetect dataset, evaluated with average accuracy (Avg. Acc). Methods were trained on ProGAN and tested on various subsets. 
    The best results are highlighted in \textbf{bold}, and the second-best are \underline{underlined}.
    }
	\label{tab:Performance_Comparison_on_UniversalFakeDetect_acc}
	{
		\centering
            \setlength{\tabcolsep}{2pt} 
            \renewcommand{\arraystretch}{1.15} 
		\resizebox{1.\linewidth}{!}{
			\begin{tabular}{cc cccccc c cc cc c ccc ccc c c}
				\toprule
				
				\multirow{2}{*}{\shortstack[c]{Detection\\Method}}  & \multicolumn{6}{c}{Generative Adversarial Networks} &\multirow{2}{*}{\shortstack[c]{Deep\\Fakes}} & \multicolumn{2}{c}{Low Level Vision} & \multicolumn{2}{c}{Perceptual Loss} &\multirow{2}{*}{Guided} & \multicolumn{3}{c}{LDM} & \multicolumn{3}{c}{Glide} & \multirow{2}{*}{DALL-E} & Total \\
				\cmidrule(lr){2-7} \cmidrule(lr){9-10} \cmidrule(lr){11-12} \cmidrule(lr){14-16} \cmidrule(lr){17-19} \cmidrule(lr){21-21}

				& \shortstack[c]{Pro-\\GAN} & \shortstack[c]{Cycle-\\GAN} & \shortstack[c]{Big-\\GAN} & \shortstack[c]{Style-\\GAN} & \shortstack[c]{Gau-\\GAN} &  \shortstack[c]{Star-\\GAN}   &  & SITD & SAN & CRN & IMLE & & \shortstack[c]{200\\Steps} & \shortstack[c]{200\\w/ CFG} & \shortstack[c]{100\\Steps} & \shortstack[c]{100\\27} & \shortstack[c]{50\\27} & \shortstack[c]{100\\10} & & \shortstack[c]{Avg.\\Acc (\%)}
				\\ 
				\midrule	
				
{\shortstack[c]{Spec}} 
& 49.9 & \textbf{99.9} & 50.5 & 49.9 & 50.3 & 99.7 & 50.1 & 50.0 & 48.0 & 50.6 & 50.1 & 50.9 & 50.4 & 50.4 & 50.3 & 51.7 & 51.4 & 50.4 & 50.0 & 55.4 \\

{\shortstack[c]{Co-occurence}} 
& 97.7 & 63.2 & 53.8 & {92.5} & 51.1 & 54.7 & 57.1 & 63.1 & 55.9 & 65.7 & 65.8 & 60.5 & 70.7 & 70.6 & 71.0 & 70.3 & 69.6 & 69.9 & 67.6 & 66.9 \\

{\shortstack[c]{CNNDet}}  
& \textbf{100.0} & 85.2 & 70.2 & 85.7 & 79.0 & 91.7 & 53.5 & 66.7 & 48.7 & \underline{86.3} & \underline{86.3} & 60.1 & 54.0 & 55.0 & 54.1 & 60.8 & 63.8 & 65.7 & 55.6 & 69.6 \\

{\shortstack[c]{Patchfor}}  
& 75.0 & 69.0 & 68.5 & 79.2 & 64.2 & 63.9 & 75.5 & \underline{75.1} & \textbf{75.3} & 72.3 & 55.3 & 67.4 & 76.5 & 76.1 & 75.8 & 74.8 & 73.3 & 68.5 & 67.9 & 71.2 \\

{\shortstack[c]{DIRE}} 
& \textbf{100.0} & 67.7 & 64.8 & 83.1 & 65.3 & \textbf{100.0} & \textbf{94.8} & 57.6 & 61.0 & 62.4 & 62.3 & \underline{83.2} & 82.7 & \underline{84.1} & 84.3 & 87.1 & 90.8 & 90.3 & 58.8 & 77.9 \\

{\shortstack[c]{UnivFD}}  
& \textbf{100.0} & \underline{98.5} & \underline{94.5} & 82.0 & \textbf{99.5} & 97.0 & 66.6 & 63.0 & 57.5 & 59.5 & 72.0 & 70.0 & \underline{94.2} & 73.8 & \underline{94.4} & 79.1 & 79.9 & 78.1 & 86.8 & 81.4 \\

{\shortstack[c]{CLIPping}}  
& \underline{99.8} & 95.6 & 93.8 & \textbf{99.2} & 93.4 & \underline{99.2} & 78.5 & 64.4 & 62.8 & 73.3 & 74.4 & \textbf{{84.3}} & 92.8 & 77.5 & 93.3 & \underline{91.2} & \underline{94.4} & \underline{92.0} & \underline{91.5} & \underline{86.9} \\
\hline
\rowcolor[HTML]{EFEFEF}{\shortstack[c]{\methodName\ (Ours)}}  
& \textbf{100.0} & 94.3 & \textbf{96.5} & \underline{96.8} & \underline{93.6} & 96.1 & \underline{88.7} & \textbf{75.8} & \underline{71.9} & \textbf{92.9} & \textbf{92.9} & 82.0 & \textbf{98.3} & \textbf{94.6} & \textbf{98.6} & \textbf{97.5} & \textbf{97.5} & \textbf{98.0} & \textbf{98.6} & \textbf{92.9} \\
				\bottomrule
		\end{tabular}}
	}
\end{table*}

\subsection{Datasets and Experimental Settings}

\textbf{Datasets.}
We evaluate the effectiveness of our proposed method on multiple benchmarks, including {UniversalFakeDetect} \cite{DBLP:conf/cvpr/OjhaLL23} and {GenImage} \cite{DBLP:conf/nips/ZhuCYHLLT0H023}.
Datasets details are provided in \cref{sec:appendix_datasets}.

\textbf{Evaluation metrics.}
Following prior work~\cite{DBLP:conf/cvpr/OjhaLL23,DBLP:conf/nips/ZhuCYHLLT0H023}, we evaluate detection using mean Average Precision (mAP) and classification accuracy. For UniversalFakeDetect, both metrics are reported, while GenImage is evaluated using accuracy with a 0.5  threshold.

\textbf{Baseline methods.} 
We compare \methodName\ with several state-of-the-art AI-generated image detection methods, including Spec \cite{DBLP:conf/wifs/0022KC19}, Co-occurrence \cite{DBLP:conf/mediaforensics/NatarajMMCFBR19}, Patchfor \cite{DBLP:conf/eccv/ChaiBLI20}, CNNDet \cite{DBLP:conf/cvpr/WangW0OE20}, DIRE \cite{DBLP:conf/iccv/WangBZWHCL23}, LaRE \cite{DBLP:conf/cvpr/LuoDYD24}, UnivFD \cite{DBLP:conf/cvpr/OjhaLL23}, CLIPping \cite{DBLP:conf/mir/KhanD24}, De-fake \cite{DBLP:conf/ccs/ShaLYZ23}, and DRCT \cite{DBLP:conf/icml/ChenZYY24}. These methods serve as baselines for evaluating the performance and generalizability of our method.

\textbf{Implementation details.} 
We implement \methodName\ by applying multimodal prompt learning to a pre-trained ViT-L/14 CLIP model. Training is conducted for 10 epochs with a batch size of 128 and a learning rate of 0.002, optimized via SGD on a single NVIDIA L40 GPU.
For the GenImage dataset, we utilize the entire training set comprising 162k real and 162k fake images. For UniversalFakeDetect, similar with \citet{DBLP:conf/mir/KhanD24}, we reduce the training set to 20k real and 20k fake images (out of the original 360k each), as the effect of training data size has been shown to be less pronounced.
To enrich the positive and negative samples in each training batch, we apply a memory bank of size $M$, which stores previously computed features along with their corresponding labels, expanding the sample pool for more effective training. 
Details of the memory bank and all hyperparameter settings are provided in \cref{sec:appendix_bank} and \cref{sec:appendix_imple_details}, respectively.

\subsection{Experimental Results}
\textbf{Comparisons on GenImage.} 
To validate the effectiveness of \methodName, we conducted comparisons using the same experimental protocol as GenImage. All methods were trained on the SDv1.4 subset of GenImage, and results were evaluated across various testing subsets. As shown in \cref{tab:Performance_Comparison_on_GenImage}, most methods achieve high accuracy on diffusion-based subsets such as SDv1.4, SDv1.5, and Wukong. However, a noticeable decline in performance is observed on more challenging subsets like Midjourney, ADM, GLIDE, VQDM, and particularly BigGAN, a non-diffusion-based generator. 
In contrast, \methodName\ demonstrates robust generalizability, showing consistent performance across all subsets. It achieves an average accuracy of 92.6\%, outperforming all baselines. Notably, on BigGAN, \methodName\ boosts accuracy from 84.7\% to 96.5\%, highlighting its ability to handle generative models with diverse architectures. These results validate the effectiveness of \methodName\ in enhancing the generalizability of AI-generated image detection, particularly for unseen and structurally diverse generative models.

\begin{table*}[ht!]
    \centering
    \caption{
    Ablation studies on the GenImage dataset.
    The best results are highlighted in \textbf{bold}, and the second-best are \underline{underlined}.
    }
    \resizebox{1.\linewidth}{!}{
    \begin{tabular}{ccccccccccccc}
        \toprule
        {Baseline} & Multimodal & Discrimitive Loss & Memory Bank & {Midjourney} & {SDv1.4} & {SDv1.5} & {ADM} & {GLIDE} & {Wukong} & {VQDM} & {BigGAN} & {Avg (\%)} \\
        \midrule
        \checkmark & $\times$ & $\times$ & $\times$ & 80.9 &	96.4 &	95.8 &	71.4 &	91.1 &	89.1 &	80.2 &	78.2 &	85.4  \\
        \checkmark & \checkmark & $\times$ & $\times$ & 70.4 &	\textbf{99.7} &	\textbf{99.6} &	71.4 &	86.9 &	\underline{97.2} &	\underline{90.0} &	94.8 &		88.7  \\
        \checkmark & \checkmark & \checkmark & $\times$ & \underline{82.2} &	\underline{99.2} &	\underline{99.1} &	\underline{77.3} &	\textbf{95.6} &	95.7 &	88.2 &	\textbf{98.3} &	\underline{91.9}  \\
        \checkmark & \checkmark & \checkmark & \checkmark & \textbf{83.2} 	& {98.8} 	& 98.5 	& \textbf{82.7} 	& \underline{91.3} 	& \textbf{97.6} 	& \textbf{92.4} 	& \underline{96.5} & \textbf{92.6} \\
        \bottomrule
    \end{tabular}
    }
    \label{tab:ablation}
\end{table*}

\textbf{Comparisons on UniversalFakeDetect.} 
\cref{tab:Performance_Comparison_on_UniversalFakeDetect_ap} and \cref{tab:Performance_Comparison_on_UniversalFakeDetect_acc} present the performance of various methods on the UniversalFakeDetect dataset, evaluated using mAP and average accuracy. These methods achieve near-perfect accuracy on the same generator (i.e., ProGAN), effectively identifying both real and fake images. 
However, their detection performance degrades to varying degrees when tested on other generators. CLIPping leverages prompt learning to optimize CLIP, highlighting its potential for this task. Building upon this, we enhance CLIP further through discriminative representation learning, surpassing all existing methods. 
Specifically, our approach achieves an average accuracy of 92.9\% and an mAP of 98.3\%, showing the effectiveness of using discriminative representation learning to guide multimodal prompt learning, particularly in improving generalizability.

\textbf{Comparisons of generalizability.} 
\begin{table}[t]
    \caption{Cross-dataset evaluation. Best in \textbf{bold}.}
    \centering
    {
    \begin{tabular}{lcccccc}
        \hline
        & \multicolumn{2}{c}{Sora} & \multicolumn{2}{c}{DALL-E 3} & \multicolumn{2}{c}{Infinity} \\
        & Acc & mAP & Acc & mAP & Acc & mAP  \\ \hline
        UnivFD        & 49.8 &	44.2 &	54.8 &	75.4 &	58.4 &	85.5  \\
        CLIPping      & 94.6 &	98.7 &	92.6 &	98.0 &	90.6 &	97.0  \\ \hline
        \rowcolor[HTML]{EFEFEF}\methodName\ (Ours)  & \textbf{95.7} &	\textbf{99.1} &	\textbf{96.7} &	\textbf{99.6} &	\textbf{97.5} &	\textbf{99.6} \\ \hline
    \end{tabular}
    }
    \label{tab:cross_dataset_acc_ap}
\end{table}
To showcase \methodName’s ability to generalize, we conduct cross-dataset evaluations. 
As shown in \cref{tab:cross_dataset_acc_ap}, \methodName\ is trained on Stable Diffusion V1.4 and tested on the commercial models Sora \cite{brooks2024video} and DALL-E 3 \cite{betker2023improving}, as well as the emerging AutoRegressive model Infinity \cite{han2024infinityscalingbitwiseautoregressive}.
Following \citet{DBLP:conf/icml/ChenZYY24}, we use MSCOCO \cite{DBLP:conf/eccv/LinMBHPRDZ14} dataset as the real class and use its text descriptions to generate fake images. For each generator, 1000 real and 1000 fake samples are collected.
\methodName\ demonstrates strong performance across these emerging models, validating its robust generalization capability. 

Additional generalizability results, including comprehensive evaluations on the emerging and challenging dataset Chameleon \cite{DBLP:journals/corr/abs-2406-19435}, as well as extensive degradation studies on the GenImage dataset under conditions such as low resolution, JPEG compression, and Gaussian blurring, are detailed in \cref{sec:appendix_additional_experiments}.

\subsection{Ablation Study}
We investigate the impact of the following factors on detection performance: (1) multimodal prompt learning; (2) discriminative loss; and (3) the effect of incorporating a memory bank. The results of the ablation experiments are presented in \cref{tab:ablation}.
We use single-modal prompt learning \cite{DBLP:journals/ijcv/ZhouYLL22} as the baseline method, achieving an accuracy of 85.4\%. Introducing multimodal prompt learning significantly improves the accuracy by 3.3\%, demonstrating the effectiveness of leveraging both vision and language branches to enhance feature alignment and generalizability. Adding the discriminative loss further boosts the accuracy by 3.2\%, indicating its role in minimizing intra-class variation and maximizing inter-class separation, which helps the model learn more robust and distinctive features.
Finally, incorporating a memory bank to enrich the diversity of training samples leads to an additional 0.7\% increase in accuracy, resulting in an overall accuracy of 92.6\%. 
These ablation studies validate the effectiveness of each component in our proposed method and highlight their contributions to the overall performance.

\subsection{Effectiveness of Hyperparameters}
\begin{figure}[t!]
\begin{center}
\centerline{
\includegraphics[width=0.9\columnwidth]{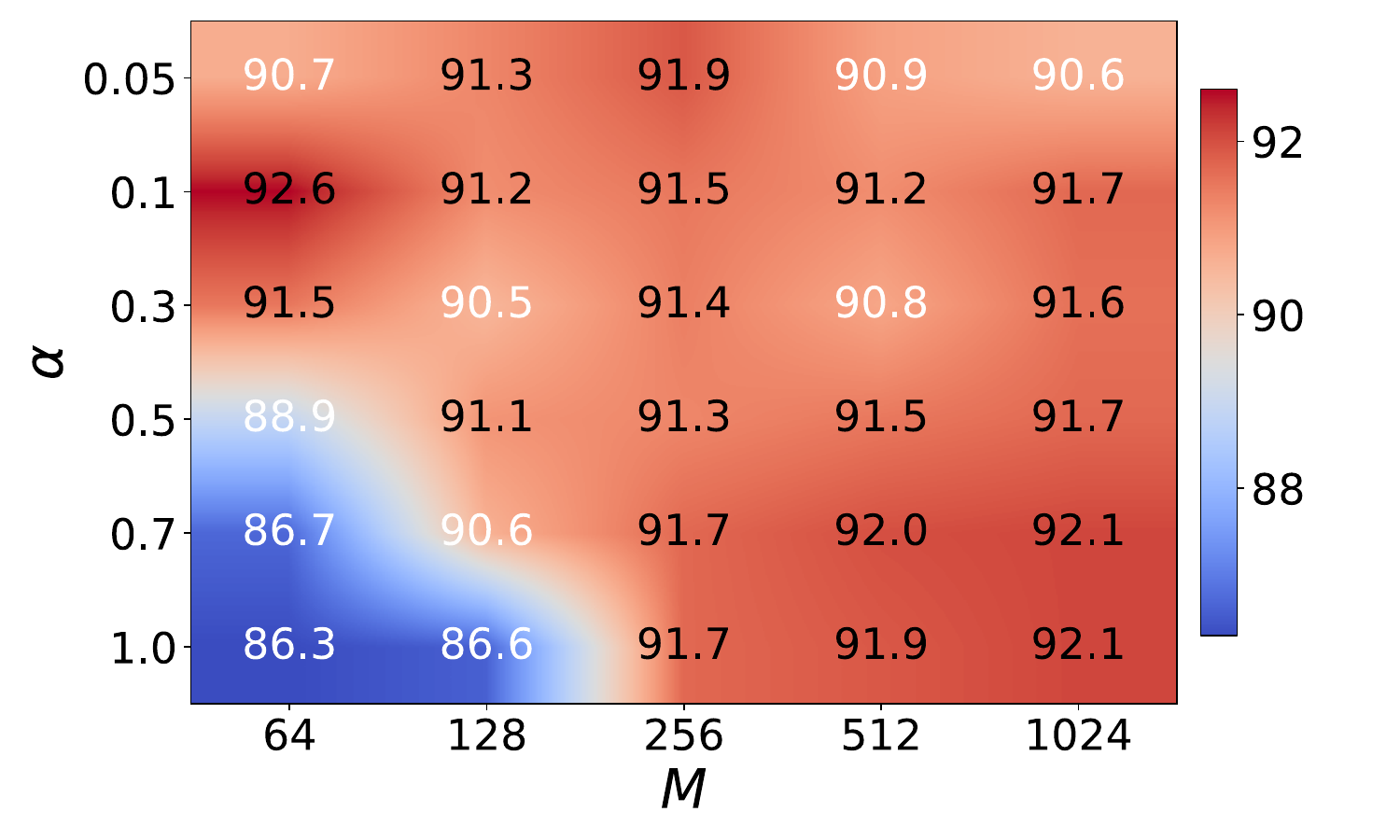}
}
\vspace{-0.1in}
\Description{
The impact of hyperparameters \(\alpha\) and \(M\).
}
\caption{
The impact of hyperparameters \(\alpha\) and \(M\).
}
\vspace{-0.2in}
\label{fig:para}
\end{center}
\end{figure}
We evaluate the impact of the discriminative loss coefficient \(\alpha\) and memory bank size \(M\) on the performance of the GenImage dataset. 
As shown in \cref{fig:para}, \methodName\ achieves stable accuracy across a wide range of hyperparameters, demonstrating its robustness. The best results are obtained with \(\alpha = 0.1\) and \(M = 64\), achieving an accuracy of 92.6\%. 
These values represent an optimal balance between the discriminative loss and the diversity of stored samples in the memory bank, contributing to the observed improvement in accuracy and validating their role in enhancing generalization.

\subsection{Effect of Training Set Size}
\begin{table}[!t]
\centering
\caption{
Effect of training set size on model performance and training time.
We keep an equal amount of real/fake images, e.g., for the 20k subset, we have 10k real and 10k fake images.
}
\label{tab:train_size_effect}
\begin{tabular}{lccc}
\toprule
\textbf{Num. Images} & \textbf{mAP (\%)} & \textbf{Avg. Acc. (\%)} & \textbf{Time (Min.)} \\
\midrule
200k & 97.80 & 92.02 & 203 \\
100k & 97.57 & 91.33 & 132 \\
50k  & 97.54 & 92.28 & 83  \\
20k  & \textbf{98.34} & \textbf{92.87} & 42  \\
10k  & 98.20 & 92.47 & 29  \\
1k   & 96.38 & 89.31 & 16  \\
\bottomrule
\end{tabular}
\end{table}
We further examined the impact of training set size by creating smaller subsets of the UniversalFakeDetect dataset \cite{DBLP:conf/cvpr/OjhaLL23}, each containing 50\% real images and 50\% fake images. Specifically, we prepared datasets with 200k, 100k, 50k, 20k, 10k, and 1k images, and trained our models under these reduced conditions. 
As summarized in \cref{tab:train_size_effect}, the impact of training data size on performance is relatively minor, showing only modest variations in mAP and average accuracy across subsets.
Meanwhile, the training time decreases considerably as the training set size shrinks. 
These findings suggest that even with limited training data, \methodName\ is still possible to achieve robust detection performance without a substantial reduction in generalization capability.

\section{Conclusion}
In this paper, we proposed \methodName, a novel method for enhancing the generalizability of AI-generated image detection. 
By integrating discriminative representation learning and multimodal prompt learning into CLIP, \methodName\ effectively minimizes intra-class variation and maximizes inter-class separation, enabling the model to learn generator-invariant features. 
Comprehensive experiments on multiple benchmarks demonstrate our state-of-the-art performance, showcasing superior adaptability to unseen generators. 
Furthermore, we validated \methodName\ on state-of-the-art generators, highlighting its robustness in handling emerging generative models.

\begin{acks}
This work is supported by the Australian Research Council under Grant
FL190100149.
\end{acks}

\bibliographystyle{ACM-Reference-Format}
\balance
\bibliography{main}

\newpage
\appendix

\section{Related Work} \label{sec:appendix_relatedwork}

\subsection{AI-generated Images Detection}\label{sec:appendix_ai_gen_detect}
In recent years, the rapid advancement of generative models has intensified research on AI-generated image detection, as these models can produce strikingly realistic images that raise concerns over misinformation, privacy, and authenticity. 
Early work often relied on specialized binary classifiers; for instance, CNNDet \cite{DBLP:conf/cvpr/WangW0OE20} directly classifies images as real or fake using a convolutional neural network. 
Several methods focus on frequency-domain analysis to detect inconsistencies: Spec \cite{DBLP:conf/wifs/0022KC19} trains a classifier on the normalized log spectrum of each RGB channel, while LNP \cite{DBLP:conf/eccv/LiuYBXLG22} observes that real images share similar noise patterns in the frequency domain, whereas generated images differ significantly. 
Co-occurrence \cite{DBLP:conf/mediaforensics/NatarajMMCFBR19} inputs co-occurrence matrices into a deep CNN, and GramNet \cite{DBLP:conf/cvpr/LiuQT20} exploits global texture representations to distinguish the substantially different textures of fake images. 
Others emphasize local artifacts rather than global semantics; 
Patchfor \cite{DBLP:conf/eccv/ChaiBLI20} uses classifiers with limited receptive fields to capture local defects, whereas Fusing \cite{DBLP:conf/icip/JuJKXNL22} adopts a dual-branch design combining global spatial information with carefully selected local patches. 
NPR \cite{DBLP:conf/cvpr/TanLZWGLW24} leverages spatial relations among neighboring pixels, and LGrad \cite{DBLP:conf/cvpr/Tan0WGW23} generates gradient maps using a pre-trained CNN, both strategies targeting low-level artifacts. 
AIDE \cite{DBLP:journals/corr/abs-2406-19435} further integrates multiple experts to extract visual artifacts and noise patterns, selecting the highest and lowest-frequency patches to detect AI-generated images based on low-level inconsistencies.

Another line of research focuses on reconstruction-based detection. DIRE \cite{DBLP:conf/iccv/WangBZWHCL23} utilizes the reconstruction capability of diffusion models and trains a classifier on the resulting reconstruction errors. 
LaRE \cite{DBLP:conf/cvpr/LuoDYD24} further refines this direction by using latent-space reconstruction errors as guidance for feature enhancement, specifically targeting diffusion-generated images. 
AEROBLADE \cite{DBLP:conf/cvpr/RickerLF24} adopts a training-free strategy by evaluating autoencoder reconstruction errors within LDMs \cite{DBLP:conf/cvpr/RombachBLEO22}. 
Similarly, DRCT \cite{DBLP:conf/icml/ChenZYY24} generates hard samples by reconstructing real images through a high-quality diffusion model and then applies contrastive learning to capture artifacts.

Recent works have leveraged CLIP-derived features for improved detection, as exemplified by UnivFD \cite{DBLP:conf/cvpr/OjhaLL23}, which trains a classifier in CLIP’s representation space, FAMSeC \cite{DBLP:journals/spl/XuYFLZ25} applies an instance-level, vision-only contrastive objective, and CLIPping \cite{DBLP:conf/mir/KhanD24}, which applies prompt learning and linear probing on CLIP’s encoders. 
While these methods show promise, they still struggle to generalize to unseen models, and focusing on a single modality in CLIP can be suboptimal. 
To address these issues, we propose a method that simultaneously optimizes image and text features using discriminative representation learning, thereby capturing generator-agnostic characteristics and enhancing generalization.

\subsection{Pre-trained Vision-Language Models} \label{sec:appendix_vl_models}
In recent years, large-scale pre-trained models that integrate both image and language modalities have achieved remarkable success, demonstrating robust performance across a variety of tasks \cite{DBLP:journals/pami/ZhangHJL24}. 
These models attract attention for their strong zero-shot capabilities and robustness to distribution shifts. 
Among them, Contrastive Language–Image Pretraining (CLIP) \cite{DBLP:conf/icml/RadfordKHRGASAM21} stands out as a large-scale approach exhibiting exceptional zero-shot performance on tasks such as image classification \cite{DBLP:journals/tfs/ShiLFZ24, DBLP:conf/mm/WangZ0024, DBLP:conf/ijcnn/WangZFL24} and image-text retrieval \cite{DBLP:conf/mm/MaXSYZJ22}. 
CLIP is trained on a dataset of 400 million image-text pairs using a contrastive loss that maximizes similarity between matched pairs while minimizing similarity between mismatched pairs.

Although CLIP demonstrates impressive zero-shot performance, further fine-tuning is often required to reach state-of-the-art accuracy on specific downstream tasks. For instance, on the simple MNIST dataset \cite{DBLP:journals/spm/Deng12}, the zero-shot CLIP model (ViT-B/16) achieved only 55\% accuracy. 
However, fully fine-tuning CLIP on a downstream dataset compromises its robustness to distribution shifts \cite{DBLP:conf/cvpr/WortsmanIKLKRLH22}. 
To address this issue, numerous studies have proposed specialized fine-tuning strategies for CLIP. One example is CoOp \cite{DBLP:journals/ijcv/ZhouYLL22}, which injects learnable vectors into the textual prompt context and optimizes these vectors during fine-tuning while freezing CLIP’s vision and text encoders. 
Nevertheless, focusing solely on the text branch may lead to suboptimal performance. Consequently, MaPLe \cite{DBLP:conf/cvpr/KhattakR0KK23} extends prompt learning to both the vision and language branches, thereby enhancing alignment between these representations. 
Building on MaPLe’s approach, we incorporate our discriminative representation learning on multimodal to address generalization challenges in AI-generated image detection.

\section{Theoretical Analysis} \label{appendix:theory}
\subsection{Minimizing Variation for Enhanced Generalization}
\newcommand{\Eall}{\Ecal_\textnormal{all}}
\newcommand{\Eava}{\Ecal_\textnormal{avail}}

\citet{DBLP:conf/nips/YeXCLLW21} provide generalization error bounds based on the {notion} of variation. Therefore, controlling the intra-class variation is crucial for bounding the generalization error. For completeness, we adapt the results from \citet{DBLP:conf/nips/YeXCLLW21} to our multimodal setting, deriving upper bounds on the generalization error.

\begin{definition}[Intra-class Variation]\label{def_invariance_general}
{\itshape
Let $\mathcal{V}\bigl(\mathbf{f}^{\rm img}; \mathscr{D}_G\bigr)$ denote the intra-class variation of a feature extractor $\mathbf{f}^{\rm img}$ on the set of generated image distributions $\mathscr{D}_G$, and let $\mathcal{V}\bigl(\mathbf{f}^{\rm img}; \mathscr{D}_N\bigr)$ denote the variation on the set of natural distributions $\mathscr{D}_N$. We define
\begin{equation}
\label{eq:variation_general}
\begin{split} 
&\mathcal{V}\bigl(\mathbf{f}^{\rm img};\,\mathscr{D}_G,\mathscr{D}_N\bigr) 
= \max \Bigl\{\mathcal{V}\bigl(\mathbf{f}^{\rm img};\mathscr{D}_G\bigr), \mathcal{V}\bigl(\mathbf{f}^{\rm img};\,\mathscr{D}_N\bigr)\Bigr\},
\end{split}
\end{equation}
where
\begin{align}
\mathcal{V}\bigl(\mathbf{f}^{\rm img};\mathscr{D}_G\bigr)
=
\max_{{P,\tilde{P}}\in\mathscr{D}_G}\rho\Bigl(P_{\mathbf{f}^{\rm img}(X)},\tilde{P}_{\mathbf{f}^{\rm img}(X)}\Bigr), \\
\mathcal{V}\bigl(\mathbf{f}^{\rm img};\,\mathscr{D}_N\bigr)
=
\max_{{Q,\tilde{Q}}\in\mathscr{D}_N}\rho\Bigl(Q_{\mathbf{f}^{\rm img}(X)},\tilde{Q}_{\mathbf{f}^{\rm img}(X)}\Bigr).
\end{align}

}
\end{definition}

\begin{lemma}[Variation Bound]\label{lemma_variation_bound}
{\itshape
For any feature extractor $\mathbf{f}^{\rm img}$, the intra-class variation satisfies
\begin{align}
\label{eq:variation_bound}
\mathcal{V}\bigl(\mathbf{f}^{\rm img};\,\mathscr{D}_G,\mathscr{D}_N\bigr) 
\;\;\le\;\;
2 \max_{P_X\in\mathscr{D}_G,\,Q_X\in\mathscr{D}_N} \quad & \nonumber \\
\mathcal{V}_{\rm CLIP}\!\Bigl(\mathbf{f}^{\rm img},\,\mathbf{f}^{\rm text};\,P_X,\,Q_X\Bigr),
\end{align}
where the right-hand side $\mathcal{V}_{\rm CLIP}$ is the variation measured via CLIP-based text anchors.
}
\end{lemma}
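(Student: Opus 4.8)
The plan is to reduce the pairwise intra-class variation on the right-hand side of \eqref{eq:variation_general} to the ``anchor-based'' variation $\mathcal{V}_{\rm CLIP}$ via a triangle inequality, exploiting the fact that $\rho$ is a genuine metric (or at least satisfies a triangle inequality up to a constant). First I would fix two arbitrary generated-image distributions $P_X,\tilde{P}_X\in\mathscr{D}_G$ and consider the pushforwards $P_{\mathbf{f}^{\rm img}(X)}$ and $\tilde{P}_{\mathbf{f}^{\rm img}(X)}$. The key observation is that both of these are being compared, in Definition~\ref{def:clip_intra1}, against the \emph{same} Dirac measure $\delta_{\mathbf{e}_{\rm Fake}}$ at the ``Fake'' text anchor. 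Hence
\begin{equation*}
\rho\bigl(P_{\mathbf{f}^{\rm img}(X)},\tilde{P}_{\mathbf{f}^{\rm img}(X)}\bigr)
\le
\rho\bigl(P_{\mathbf{f}^{\rm img}(X)},\delta_{\mathbf{e}_{\rm Fake}}\bigr)
+\rho\bigl(\delta_{\mathbf{e}_{\rm Fake}},\tilde{P}_{\mathbf{f}^{\rm img}(X)}\bigr),
\end{equation*}
and each summand on the right is exactly $\mathcal{V}(\mathbf{f}^{\rm img},\mathbf{f}^{\rm text};P_X)$ and $\mathcal{V}(\mathbf{f}^{\rm img},\mathbf{f}^{\rm text};\tilde{P}_X)$ respectively. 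Each of these is in turn bounded by $\mathcal{V}_{\rm CLIP}(\mathbf{f}^{\rm img},\mathbf{f}^{\rm text};P_X,Q_X)$ for any choice of $Q_X\in\mathscr{D}_N$, by the $\max$ in \eqref{eq:clip_variation}. Taking the supremum over $P_X,\tilde{P}_X$ gives $\mathcal{V}(\mathbf{f}^{\rm img};\mathscr{D}_G)\le 2\max_{P_X,Q_X}\mathcal{V}_{\rm CLIP}$.

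Next I would run the identical argument on the natural side: for $Q_X,\tilde{Q}_X\in\mathscr{D}_N$, both pushforwards are compared against $\delta_{\mathbf{e}_{\rm Real}}$, so the same triangle-inequality step yields $\mathcal{V}(\mathbf{f}^{\rm img};\mathscr{D}_N)\le 2\max_{P_X,Q_X}\mathcal{V}_{\rm CLIP}$. Since \eqref{eq:variation_general} defines $\mathcal{V}(\mathbf{f}^{\rm img};\mathscr{D}_G,\mathscr{D}_N)$ as the maximum of these two quantities, and both are dominated by the same bound $2\max_{P_X\in\mathscr{D}_G,Q_X\in\mathscr{D}_N}\mathcal{V}_{\rm CLIP}(\mathbf{f}^{\rm img},\mathbf{f}^{\rm text};P_X,Q_X)$, the lemma follows by taking the max.

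The main obstacle — really the only subtle point — is justifying the triangle inequality in the space of probability distributions with the chosen ``distance'' $\rho$. For a true metric such as the Wasserstein-$1$ distance or a metric induced by an integral probability metric, this is immediate; for $\rho = 1-\cos(\cdot)$ lifted to distributions it is less obvious, and one may need to either (i) restrict attention to the metric choices for which the triangle inequality genuinely holds, or (ii) absorb a multiplicative constant into the bound (which only affects the hidden constant in the $O(\cdot)$ of Theorem~\ref{T4.4} and is therefore harmless). I would state the lemma under the standing assumption that $\rho$ satisfies a triangle inequality (as is standard in the framework of \citet{DBLP:conf/nips/YeXCLLW21}), and remark that the factor $2$ is exactly the price of passing through the common anchor. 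A secondary, purely bookkeeping point is that the $\max$ over the finite/compact sets $\mathscr{D}_G,\mathscr{D}_N$ is attained (or replaced by $\sup$), so that ``for any $Q_X$'' can be upgraded to the joint maximum on the right-hand side; this is routine and does not affect the structure of the argument.
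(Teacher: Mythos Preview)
Your proposal is correct and follows essentially the same approach as the paper: apply the triangle inequality for $\rho$ through the Dirac measure at the relevant text anchor ($\delta_{\mathbf{e}_{\rm Fake}}$ for $\mathscr{D}_G$, $\delta_{\mathbf{e}_{\rm Real}}$ for $\mathscr{D}_N$), bound each summand by the $\max$ defining $\mathcal{V}_{\rm CLIP}$, and then pass to the maximum over both classes. The paper's proof is terser---it simply asserts the two anchor-based inequalities and sums them---whereas you are more explicit about invoking the triangle inequality and more careful about the metric assumption on $\rho$; but the underlying argument is identical.
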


\begin{proof} 
We first show that for any feature extractor $\mathbf{f}^{\rm img}$,
\begin{equation}
\begin{split}
\max_{P,\tilde{P}\in\mathscr{D}_G}
\rho\Bigl(P_{\mathbf{f}^{\rm img}(X)},\tilde{P}_{\mathbf{f}^{\rm img}(X)}\Bigr)
&\le
\max_{P\in\mathscr{D}_G}
\rho\bigl(P_{\mathbf{f}^{\rm img}(X)},\delta_{\mathbf{e}_{\rm Fake}}\bigr)
\\ &\quad +
\max_{\tilde{P}\in\mathscr{D}_G}
\rho\bigl(\tilde{P}_{\mathbf{f}^{\rm img}(X)},\delta_{\mathbf{e}_{\rm Fake}}\bigr),
\\
\max_{Q,\tilde{Q}\in\mathscr{D}_N}
\rho\Bigl(Q_{\mathbf{f}^{\rm img}(X)},\tilde{Q}_{\mathbf{f}^{\rm img}(X)}\Bigr)
&\le
\max_{Q\in\mathscr{D}_N}
\rho\bigl(Q_{\mathbf{f}^{\rm img}(X)},\delta_{\mathbf{e}_{\rm Real}}\bigr)
\\ &\quad +
\max_{\tilde{Q}\in\mathscr{D}_N}
\rho\bigl(\tilde{Q}_{\mathbf{f}^{\rm img}(X)},\delta_{\mathbf{e}_{\rm Real}}\bigr).
\end{split}
\end{equation}
These inequalities imply that summing the worst-case deviation for two distributions (\(P\) and \(\tilde{P}\)) can be upper-bounded by the sum of deviations from each distribution to a text anchor (the Dirac measure $\delta_{\mathbf{e}_{\rm Fake}}$ or $\delta_{\mathbf{e}_{\rm Real}}$). Consequently,
\begin{align}
\max_{P,\tilde{P}\in\mathscr{D}_G}
\rho\Bigl(P_{\mathbf{f}^{\rm img}(X)},\tilde{P}_{\mathbf{f}^{\rm img}(X)}\Bigr)
+
\max_{Q,\tilde{Q}\in\mathscr{D}_N}
\rho\Bigl(Q_{\mathbf{f}^{\rm img}(X)},\tilde{Q}_{\mathbf{f}^{\rm img}(X)}\Bigr)
&\le
\notag \\
2 \times
\max_{P_X\in\mathscr{D}_G,\,Q_X\in\mathscr{D}_N}
\mathcal{V}_{\rm CLIP}\Bigl(\mathbf{f}^{\rm img},\mathbf{f}^{\rm text};P_X,Q_X\Bigr).
\end{align}
By the definition of \(\mathcal{V}\bigl(\mathbf{f}^{\rm img};\,\mathscr{D}_G,\mathscr{D}_N\bigr)\), we then conclude the bound in \cref{lemma_variation_bound}.
\end{proof}

\begin{definition}[Expansion Function \cite{DBLP:conf/nips/YeXCLLW21}] \label{def_expan}
{\itshape
We say a function $s:\mathbb R^+ \cup \{0\} \to \mathbb R^+ \cup \{0, +\infty\}$ is an expansion function, iff the following properties hold: 
1) $s(\cdot)$ is monotonically increasing and $s(x)\geq x,\forall x\geq0$; 2) $\lim_{x\to 0^+} s(x) = s(0) = 0$.
}
\end{definition}

Since it is impossible to generalize to an arbitrary distribution, characterizing the relationship between $P_{X_{tr}}$ and $P_X$, as well as between $Q_{X_{tr}}$ and $Q_X$ is essential to formalize generalization. Building on the expansion function, we define the {learnability} of a generalization problem as follows:

\begin{definition}[Learnability]\label{def_learn}
{\itshape
Let $\Phi$ be the feature space. We say a generalization problem from $P_{X_{tr}}, Q_{X_{tr}}$ to $P_X, Q_X$ is {learnable} if there exist an expansion function $s(\cdot)$ and a constant $\delta\ge 0$ such that for all $\mathbf{f}^{\rm img}(\mathbf x) \in \Phi$ satisfying $\mathcal{P}(\mathbf{f}^{\rm img};\mathscr{D}_G, \mathscr{D}_N) \geq \delta$, the following hold:
\begin{align}
s\bigl(\mathcal{V}(\mathbf{f}^{\rm img},\mathbf{f}^{\rm text};P_{X_{tr}})\bigr) 
&\;\ge\; 
\mathcal{V}(\mathbf{f}^{\rm img},\mathbf{f}^{\rm text};P_X), \\
s\bigl(\mathcal{V}(\mathbf{f}^{\rm img},\mathbf{f}^{\rm text};Q_{X_{tr}})\bigr)
&\;\ge\;
\mathcal{V}(\mathbf{f}^{\rm img},\mathbf{f}^{\rm text};Q_X).
\end{align}
If such $s(\cdot)$ and $\delta$ exist, we further call this problem $(s(\cdot), \delta)$-learnable.
}
\end{definition}

\begin{theorem}[Error Upper Bound]\label{general bound full}
Suppose we have learned a classifier with loss function $\ell(\cdot, \cdot)$, and for all $y \in \mathcal{V}$, the conditional density $p_{h|Y}(h|y)$ satisfies $p_{h|Y} (h|y) \in L^2(\mathbb{R}^D).$  
Let $\mathbf{f}^{\rm img} \in \mathbb{R}^D$ denote the image feature extractor, and define the characteristic function of the random variable $h|Y$ as $\hat{p}_{h|Y}(t|y) = \mathbb{E}[\exp\{i \langle t, h \rangle\} \mid Y = y].$

Assume the hypothesis space $\mathcal{F}$ satisfies the following regularity conditions: there exist constants $\alpha, M_1, M_2 > 0$ such that for all $f \in \mathcal{F}$ and $y \in \mathcal{Y}$,
\begin{align}
\int_{h \in \mathbb{R}^D} p_{h|Y}(h|y) |h|^\alpha \mathrm{d}h \leq M_1, \quad 
\int_{t \in \mathbb{R}^D} |\hat{p}_{h|Y}(t|y)| |t|^\alpha \mathrm{d}t \leq M_2.
\end{align}

If $(\mathbf{f}; \mathscr{D}_G, \mathscr{D}_N)$ is $(s(\cdot), \delta)$-learnable under $\Phi$ with Total Variation $\rho$\footnote{For two distributions $\mathbb{P}, \mathbb{Q}$ with probability density functions $p, q$, $\rho(\mathbb{P}, \mathbb{Q}) = \frac{1}{2} \int_x |p(x) - q(x)| \mathrm{d}x$.}, then the generalization error is bounded as:
\begin{equation} 
\mathrm{err}(\mathbf{f}; \mathscr{D}_G, \mathscr{D}_N) \leq 
O\Big( \big( \mathcal{V}_{\rm CLIP}^{\rm sup}(\mathbf{f}^{\rm img}, \mathbf{f}^{\rm text}; P_{X_{tr}}, Q_{X_{tr}}) \big)^{\frac{\alpha^2}{(\alpha + D)^2}} \Big),
\end{equation}
where the constant $O(\cdot)$ depends on $D$, $\alpha$, $M_1$, and $M_2$.
\end{theorem}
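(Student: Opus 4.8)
The plan is to reduce the abstract generalization-error statement to a Fourier-analytic estimate on the conditional densities, following the structure of \citet{DBLP:conf/nips/YeXCLLW21} but tracking the multimodal CLIP-based variation. First I would unpack the definition of $\mathrm{err}(\mathbf{f};\mathscr{D}_G,\mathscr{D}_N)$ from \cref{def:clip_err}: it is a sum of two worst-case excess-risk terms, one over $\mathscr{D}_G$ against the training generated distribution and one over $\mathscr{D}_N$ against the training natural distribution. Since $\ell$ is bounded, each excess-risk term can be controlled by the total variation distance $\rho\bigl(P_{\mathbf{f}^{\rm img}(X)},P_{X_{tr},\mathbf{f}^{\rm img}(X)}\bigr)$ between the pushed-forward feature distributions (up to a multiplicative constant equal to $\|\ell\|_\infty$); this is the standard change-of-measure step. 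So it suffices to bound $\max_{P_X\in\mathscr{D}_G}\rho\bigl(P_{\mathbf{f}^{\rm img}(X)},P_{X_{tr},\mathbf{f}^{\rm img}(X)}\bigr)$ and its $\mathscr{D}_N$ counterpart by the intra-class variation $\mathcal{V}(\mathbf{f}^{\rm img};\mathscr{D}_G,\mathscr{D}_N)$, which by \cref{lemma_variation_bound} is in turn bounded by $2\max\mathcal{V}_{\rm CLIP}$.

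The core estimate is then: the total-variation distance between two distributions on $\mathbb{R}^D$ sharing the same label-conditional structure is controlled by a fractional power of their (one-dimensional, or low-dimensional) variation, under the moment conditions $\int p_{h|Y}|h|^\alpha\,\mathrm{d}h\le M_1$ and $\int |\hat p_{h|Y}(t|y)||t|^\alpha\,\mathrm{d}t\le M_2$. The argument here is the Fourier-truncation trick: write the $L^2$ difference of densities via Plancherel, split the frequency integral at a radius $R$, bound the low-frequency part by the characteristic-function discrepancy (which is Lipschitz-controlled by the Wasserstein/variation quantity), and bound the high-frequency tail using $M_2$ and the decay $|t|^{-\alpha}$. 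Optimizing over $R$ produces the exponent $\tfrac{\alpha^2}{(\alpha+D)^2}$ — one factor $\tfrac{\alpha}{\alpha+D}$ from converting total variation in feature space to the supremal projected variation $\mathcal{V}_{\rm CLIP}^{\rm sup}$ over $\beta\in\mathbb{S}^{D-1}$ (a slicing/Cramér–Wold step controlled by $M_1$), and a second factor from the Fourier-truncation optimization controlled by $M_2$. The learnability assumption (\cref{def_learn}) with expansion function $s(\cdot)$ is what lets me pass from the variation evaluated at the test distributions to the variation at the training distributions $P_{X_{tr}},Q_{X_{tr}}$, absorbing $s$ into the $O(\cdot)$ since $s(0)=0$ and $s$ is monotone.

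Concretely the steps, in order, are: (1) bounded-loss change of measure to reduce excess risk to total variation in feature space; (2) triangle inequality through the text anchors $\delta_{\mathbf{e}_{\rm Real}},\delta_{\mathbf{e}_{\rm Fake}}$, invoking \cref{lemma_variation_bound}, to replace pairwise feature-distribution distances by $\mathcal{V}_{\rm CLIP}$; (3) apply learnability to swap test-distribution variation for training-distribution variation via $s(\cdot)$; (4) the slicing step reducing $D$-dimensional total variation to the supremal one-dimensional projected variation $\mathcal{V}_{\rm CLIP}^{\rm sup}$, using the $M_1$ moment bound; (5) the Fourier-truncation estimate with the $M_2$ bound, and the radius optimization yielding the claimed power.

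The main obstacle I anticipate is step (5) together with the precise bookkeeping of how the two exponents compose to give $\alpha^2/(\alpha+D)^2$: one must be careful that the slicing inequality loses a power $\alpha/(\alpha+D)$ \emph{and} the Fourier step loses a further $\alpha/(\alpha+D)$, rather than the two effects being independent — the correct accounting requires feeding the \emph{output} of the slicing bound as the \emph{input} perturbation scale into the Fourier-truncation optimization, so the exponents multiply rather than add. A secondary subtlety is ensuring the moment conditions are stated uniformly over the hypothesis class $\mathcal{F}$ so that the constant $O(\cdot)$ genuinely depends only on $D,\alpha,M_1,M_2$ and not on the particular $\mathbf{f}$; this is exactly why the regularity conditions in the theorem statement are quantified over all $f\in\mathcal{F}$. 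I would not expect the change-of-measure or triangle-inequality steps to present real difficulty — they are routine — so the analytic heart of the proof is entirely in the interplay of steps (4) and (5).
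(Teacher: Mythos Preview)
Your proposal is correct in spirit but takes a substantially more laborious route than the paper. The paper's proof is essentially a two-line reduction: it constructs a family of joint distributions $P_\alpha$ (with conditionals $P_{X_{tr}}, Q_{X_{tr}}$) and $\Delta_\alpha$ (with conditionals the Dirac measures $\delta_{\mathbf{e}_{\rm Fake}}, \delta_{\mathbf{e}_{\rm Real}}$ at the text anchors), sets $\mathcal{E}_{\rm avail} = \{P_\alpha\} \cup \{\Delta_\alpha\}$ in the framework of \citet{DBLP:conf/nips/YeXCLLW21}, and then invokes their Theorem~4.1 as a black box together with \cref{lemma_variation_bound}. The point of including the Dirac-anchor distributions $\Delta_\alpha$ in $\mathcal{E}_{\rm avail}$ is that the variation quantity in Ye et al.'s bound then automatically becomes the CLIP-based variation, so no separate triangle-inequality-through-anchors step is needed downstream.

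By contrast, you propose to reconstruct the analytic content of Ye et al.'s Theorem~4.1 from scratch---the change-of-measure step, the slicing reduction to one-dimensional projections, and the Fourier-truncation optimization that produces the exponent $\alpha^2/(\alpha+D)^2$. This is a valid self-contained argument, and your breakdown of how the two $\alpha/(\alpha+D)$ factors compose is the right intuition. What you gain is independence from the cited result; what the paper gains is brevity and the avoidance of re-deriving a known estimate. The one structural idea you did not make explicit is the packaging of the text-anchor Diracs as \emph{members of the available-distribution family} rather than as auxiliary triangulation points---this is what lets the paper skip your steps (4) and (5) entirely and defer them to the citation.
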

\begin{proof}
Given distributions $P_{\alpha}$ and $\Delta_{\alpha}$ defined over \[\mathcal{X}\;\times\;\{{\;\rm Fake},\;{\rm Real\;}\}\] satisfying that
\begin{equation}
\begin{split}
P(\mathbf{x}|y={\rm Fake})=P_{X_{tr}}(\mathbf{x}),\quad
P(\mathbf{x}|y={\rm Real})=Q_{X_{tr}}(\mathbf{x}), \\
\Delta(\mathbf{x}|y={\rm Fake})=\delta_{\rm Fake}(\mathbf{x}),\quad
\Delta(\mathbf{x}|y={\rm Real})=\delta_{\rm Real}(\mathbf{x}),
\end{split}
\end{equation}
and
\begin{equation}
\begin{split}
P(y={\rm Fake})=\alpha,\quad
P(y={\rm Real})=1-\alpha, \\
\Delta(y={\rm Fake})=\alpha,\quad
\Delta(y={\rm Real})=1-\alpha,
\end{split}
\end{equation}
we set $\mathcal{E}_{avail}$ in Theorem~4.1 of \citet{DBLP:conf/nips/YeXCLLW21} as $\{P_{\alpha}:\forall \alpha\in(0,1)\}\cup \{\Delta_{\alpha}:\forall \alpha\in(0,1)\}$.
Then this result can be concluded by Theorem~4.1 of \citet{DBLP:conf/nips/YeXCLLW21} and our \cref{lemma_variation_bound} directly.
\end{proof}

\section{Inter-class Separation} \label{appendix:seperation}
Analysis in \cite{DBLP:conf/nips/XieW0C023} shows that inter-class dispersion is strongly correlated with the model accuracy, reflecting the generalization performance on test data.

\subsection{Where Inter-class Separation Comes From}
\label{sec:appendix_inter_class_sep}

Recall that \(\mathcal{L}_{\mathrm{dis}}\) involves the denominator
\[
\sum_{j \in A(i)} \exp\bigl(\langle \mathbf{h}_i,\mathbf{h}_j\rangle/\tau\bigr),
\]
which sums over {all} other samples \(a\in A(i)\) (both positives and negatives). The goal of minimizing 
\[
\log\Bigl(
  \frac{\exp\bigl(\langle \mathbf{h}_i,\mathbf{h}_p\rangle/\tau \bigr)}
       {\sum_{j\in A(i)}\exp\bigl(\langle \mathbf{h}_i,\mathbf{h}_j\rangle/\tau \bigr)}
\Bigr)
\]
is to make the positive pair \(\langle \mathbf{h}_i,\mathbf{h}_p\rangle\) {dominate} that ratio. 
For any negative \(n\) (with \(y_n \neq y_i\)), having a high similarity \(\langle \mathbf{h}_i,\mathbf{h}_n\rangle\) would reduce the fraction in the softmax, thereby raising the loss. 
Hence, the optimization naturally favors
\begin{align}
\exp\bigl(\langle \mathbf{h}_i,\mathbf{h}_p\rangle/\tau\bigr) 
\;\;\gg\;\;
\exp\bigl(\langle \mathbf{h}_i,\mathbf{h}_n\rangle/\tau\bigr)
\quad & \notag\\
\forall \; p\in P(i),\; n\in A(i)\setminus P(i). & \notag
\end{align}
This condition simultaneously {pulls} same-class pairs closer and {pushes} different-class pairs apart, thus increasing inter-class separation. 
Intuitively, if two samples belong to different classes but still have a large dot product, they “compete” with the positive pairs, causing a higher loss. 
Over many gradient steps, the model adapts by reducing the similarity between different-class samples.

\subsection{Mathematical Basis for Inter-class Separation}
\label{sec:appendix_formal_argument}

We can further formalize the above intuition by analyzing \(\mathcal{L}_{\mathrm{dis}}\) from a {pairwise} and {margin-based} perspective. 

\textbf{Pairwise comparisons with log-softmax.}
Rewrite the loss \(\mathcal{L}_{\mathrm{dis}}\) for each $i \in \mathcal{I}$ as:
\[
- \sum_{p \in P(i)} 
   \log\Bigl(
   \frac{e^{\langle \mathbf{h}_i, \mathbf{h}_p\rangle/\tau}}
        {\sum_{j\in A(i)} e^{\langle \mathbf{h}_i,\mathbf{h}_j\rangle/\tau}}
   \Bigr)
   \times \frac{1}{|P(i)|}.
\]
For this expression to be small, each term inside the log must be large, {i.e.}, 
\[
\frac{e^{\langle \mathbf{h}_i, \mathbf{h}_p\rangle/\tau}}
     {\sum_{j\in A(i)} e^{\langle \mathbf{h}_i,\mathbf{h}_j\rangle/\tau}}
\;\;\text{is close to 1.}
\]
As a result, any negative \(n\neq p\) with high similarity \(\langle \mathbf{h}_i,\mathbf{h}_n\rangle\) {directly} lowers this probability and thus {increases} the loss. Minimizing the sum effectively {penalizes} large similarities to negatives. 
In short, {raising} \(\langle \mathbf{h}_i,\mathbf{h}_p\rangle\) forces \(\langle \mathbf{h}_i,\mathbf{h}_n\rangle\) (for \(n\neq p\)) to stay lower.

\textbf{Margin-based constraints.}
Consider enforcing a margin \(\delta>0\) between positive and negative similarities, such that
\[
\langle \mathbf{h}_i,\mathbf{h}_p\rangle 
\;\ge\; 
\langle \mathbf{h}_i,\mathbf{h}_n\rangle + \delta,
\quad
\forall (p\in P(i),\; n \notin P(i)).
\]
When substituted into the softmax term, even a small positive margin \(\delta\) significantly reduces the negative pairs’ exponential scores relative to the positive pairs. Minimizing the overall loss under such a margin constraint reveals a {pairwise repulsion} effect: 
(1) If \(\langle \mathbf{h}_i,\mathbf{h}_p\rangle\) is consistently larger than \(\langle \mathbf{h}_i,\mathbf{h}_n\rangle\) by at least \(\delta\), then the ratio for each positive sample \(p\) stays high.  
(2) Violating this margin (letting \(\langle \mathbf{h}_i,\mathbf{h}_n\rangle\) get too close or exceed \(\langle \mathbf{h}_i,\mathbf{h}_p\rangle\)) incurs a heavier penalty, pushing the model to further lower negative similarities.  

\textbf{Why this promotes inter-class separation.}
Since any two samples \(i\) and \(n\) with different labels eventually appear in each other’s denominators, repeated updates across the entire dataset ensure that \(\mathbf{h}_n\) and \(\mathbf{h}_i\) do not remain highly similar if \(y_n \neq y_i\). Over time, the network learns a {global} arrangement in which inter-class pairs are systematically pushed apart, producing well-separated clusters in the embedding space.

\subsection{Geometric Interpretation}
\label{sec:appendix_geometry}
Discriminative representation learning can also be viewed through a purely geometric lens. Each feature vector \(\mathbf{h}_i\) is normalized (often to lie on the unit hypersphere), and the learning objective \(\mathcal{L}_{\mathrm{dis}}\) penalizes large angles (low cosine similarity) for same-class pairs while rewarding large angles for different-class pairs. 

Concretely, the supervised contrastive loss exhibits the following geometric effects:

{Same-class:}
For positive pairs \((i, p)\) where \(p \in P(i)\) and shares the label \(y_i\), the dot product \(\langle \mathbf{h}_i, \mathbf{h}_p \rangle\) should be high. Since the features are normalized to lie on a unit hypersphere, this implies \(\mathbf{h}_p\) is positioned within a narrow cone centered on \(\mathbf{h}_i\).

{Cross-class:}
For negative pairs \((i, n)\) where \(n \notin P(i)\) and \(y_n \neq y_i\), the dot product \(\langle \mathbf{h}_i, \mathbf{h}_n \rangle\) should be comparatively low. Geometrically, this ensures that \(\mathbf{h}_n\) is directed away from \(\mathbf{h}_i\) on the sphere, increasing the angular separation between different-class samples.

Over the dataset, these pairwise “push-pull” forces yield a partitioning of the hypersphere into well-separated clusters. The higher-level geometry of the loss function ensures that each class cluster remains cohesive while classes themselves lie farther apart.

\subsection{Role of Text Anchors}
\label{sec:appendix_text_anchors}

In multi-modal contexts, each class can also have a dedicated {text anchor}, \(\mathbf{e}_y\). Below, we elaborate on how these anchors reinforce the separation effect:

1. Explicit Class Centers. 
   Each text embedding \(\mathbf{e}_{\mathrm{Real}}\) or \(\mathbf{e}_{\mathrm{Fake}}\) (for example) acts as a fixed or learnable “center” of that class. Images with label \(\mathrm{Real}\) are {pulled} toward \(\mathbf{e}_{\mathrm{Real}}\), and images with label \(\mathrm{Fake}\) are {pulled} toward \(\mathbf{e}_{\mathrm{Fake}}\).  

2. Cross-text Repulsion.
   Text anchors for different classes, say \(\mathbf{e}_{\mathrm{Real}}\) vs.\ \(\mathbf{e}_{\mathrm{Fake}}\), serve as negatives to each other. Consequently, the system learns to keep these class-centered vectors well apart in the embedding space, reinforcing the boundary between classes, thus further intensifying discriminative separation.

3. Strong Guidance for Images.
   Because each image embedding with label $y_i$ sees \(\mathbf{e}_{y_i}\) as its top positive match, it gains a clear “target direction” on the sphere, ensuring that all \(\mathrm{Real}\) images converge near \(\mathbf{e}_{\mathrm{Real}}\) and all \(\mathrm{Fake}\) images converge near \(\mathbf{e}_{\mathrm{Fake}}\). If there are multiple classes, the same logic extends to each label’s text anchor.

In summary, text anchors serve as pivotal reference points that shape the global arrangement of class embeddings. Anchors from different classes introduce a mutual repulsion, promoting inter-class separation, while each anchor and its associated images maintain attractive forces that consolidate intra-class structure. By coupling language and vision through these textual anchors, the framework not only integrates information from both modalities but also rigorously enforces class boundaries in the embedding space.

\section{Extension with Memory Bank.} \label{sec:appendix_bank}
\begin{figure*}[t!]
\centering{
\includegraphics[page=2, width=0.85\textwidth]{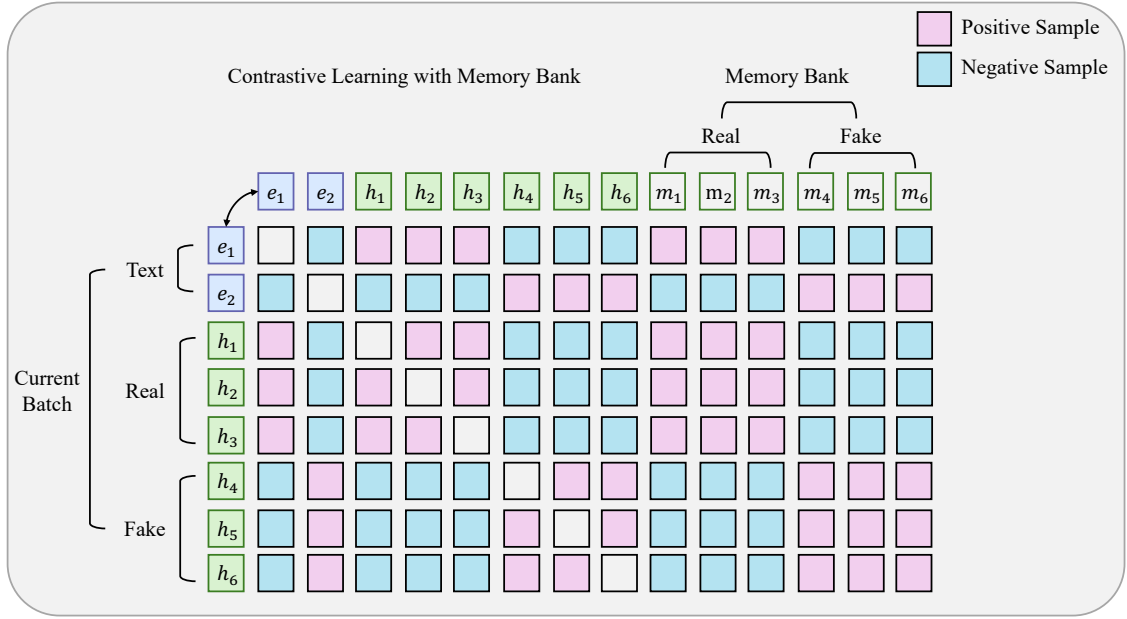}
}
\Description{
Overview of the memory bank.
}
\caption{
Overview of the memory bank.
During training, the memory bank maintains a dynamic queue of historical embeddings and their labels. For each batch, we concatenate current batch embeddings with memory bank samples to construct an augmented embedding set. This expanded pool enables richer positive and negative samples while preserving temporal diversity. The memory bank is updated in a first-in-first-out (FIFO) manner with current batch embeddings after each training step.
}
\label{fig:bank}
\end{figure*}
To further enhance both positive and negative sample diversity while maintaining temporal consistency, we introduce a memory bank mechanism that stores historical embeddings across training iterations. As illustrated in Figure~\ref{fig:bank}, this module operates through three key phases: (1) augmented embedding construction, (2) discriminative representation learning with expanded sample pools, and (3) dynamic memory updating.

\textbf{Augmented embedding construction.}
Let \(\mathcal{M} = \{\mathbf{m}_k\}_{k=1}^M\) denote the memory bank storing \(M\) historical embeddings and their corresponding labels. For each training batch \(\mathcal{H} = \{\mathbf{h}_i\}_{i={-1}}^I\), we construct an augmented embedding set:
\begin{equation}
    \tilde{\mathcal{H}} = \mathcal{H} \cup \{\mathbf{m}_k\}_{k=1}^M.
\end{equation}
This expansion enables each anchor to observe \(M\) additional historical examples while maintaining computational efficiency.

\textbf{Dircriminative representation learning with expanded pools.}
We extend the original discriminative loss in Eq.~\eqref{eq_dis} by recalculating the positive relationships over \(\tilde{\mathcal{H}}\):
\begin{align}
    \tilde{A}(i) &= A(i) \cup \{I+1,\dots,I+M\}, \\
    \tilde{P}(i) &= \bigl\{p \in \tilde{A}(i)\mid y_p = y_i\bigr\}.
\end{align}
The revised memory-augmented contrastive loss becomes:
\begin{align}
\mathcal{L}_{\text{dis}}^{\prime} = -\sum_{i \in \tilde{\mathcal{I}}} \frac{1}{|\tilde{P}(i)|}\sum_{p \in \tilde{P}(i)} \log\frac{\exp(\langle\mathbf{h}_i,\mathbf{h}_p\rangle/\tau)}{\sum_{j \in \tilde{A}(i)}\exp(\langle\mathbf{h}_i,\mathbf{h}_j\rangle/\tau)},
\end{align}
where \(\tilde{\mathcal{I}} = \{-1,\dots,I+M\}\) indexes the augmented set. This formulation forces each anchor to discriminate against both current and historical negative samples while aggregating positives across temporal domains. 
Note that historical embeddings in the current batch are detached from the computational graph and do not receive gradient updates.

\textbf{Memory update strategy.}
We employ a first-in-first-out (FIFO) update rule after processing each batch:
\begin{equation}
    \mathcal{M} \leftarrow \mathcal{M}_{\setminus \{1,\dots,I\}} \cup \{\mathbf{h}_i\}_{i=1}^I,
\end{equation}
where \(I\) denotes the batch size. This ensures the memory bank retains recent embeddings while preserving diversity through gradual replacement of older samples.

Through synergistic integration of historical and current embeddings, our memory bank enables learning from a more comprehensive distribution of positive and negative samples. This strengthens both intra-class compactness and inter-class separation, particularly benefiting generalization to unseen generative models.

\textbf{Computational Overhead of the Memory Bank.} 
The memory bank introduces negligible computational overhead and is virtually cost-free. 
It reuses historical embeddings from previous batches without requiring extra forward passes, thereby avoiding any additional encoding.
Each stored embedding is a low-dimensional vector (dimension 1024), so even with a typical bank size ranging from \(M=64\) to \(M=1024\), the total GPU memory overhead is at most 2 MB using float16 precision, insignificant compared to the overall GPU memory cost during training. 
Moreover, during loss computation, memory bank embeddings are detached from the computational graph and do not receive gradients, limiting the extra computation to lightweight matrix operations. 
Overall, the memory bank adds minimal training cost while consistently enhancing performance, making it an efficient and practical component.

\section{Multimodal Prompt Learning} \label{sec:appendix_maple}
\begin{figure*}[t!]
\centering{
\includegraphics[page=3, width=0.85\textwidth]{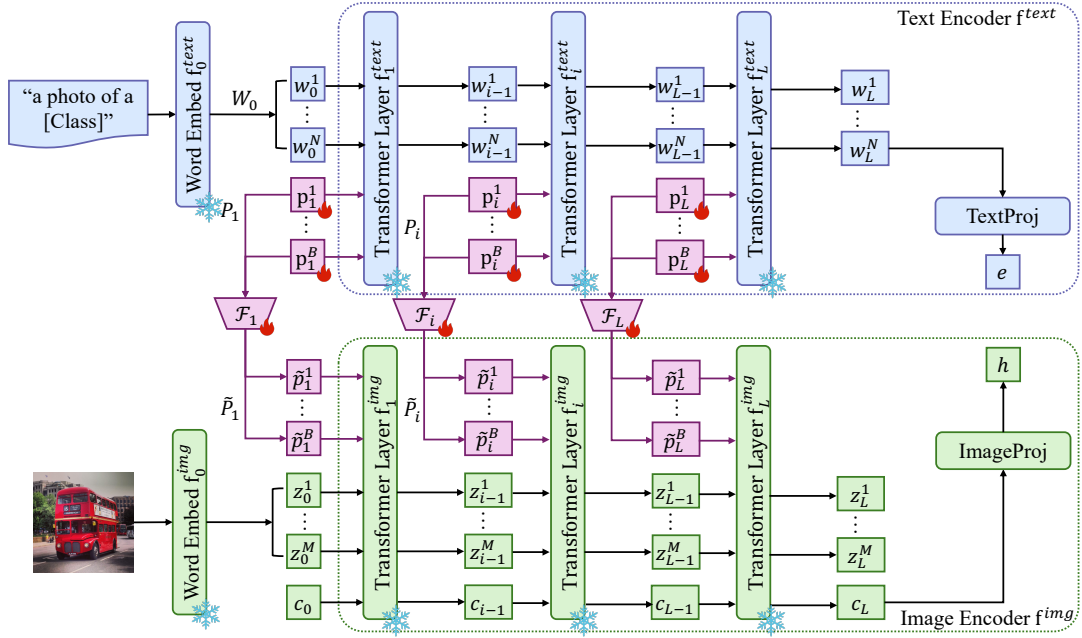}
}
\Description{
Illustration of our multimodal prompt learning. 
}
\caption{
Illustration of our multimodal prompt learning. 
For simplicity, we assume both the text and image encoders have $L$ transformer layers. 
We introduce learnable embeddings at each layer and apply a linear mapping function to couple textual and visual embeddings.
}
\label{fig:maple}
\end{figure*}


Our multimodal prompt learning (depicted in \cref{fig:maple}) aims to adapt both the text and image branches of CLIP while keeping the original encoders frozen. 
As outlined in \cref{Sec::MPL}, each encoder comprises $L$ transformer layers, with the text encoder denoted by $\{\mathbf{f}_i^{\text{text}}\}_{i=1}^L$ and the image encoder by $\{\mathbf{f}_i^{\text{img}}\}_{i=1}^L$. 
To achieve efficient optimization and reduce the total number of trainable parameters, we introduce lightweight learnable embeddings for text and mapped embeddings for vision at each layer.

\textbf{Unmatched depth.}
In practice, certain CLIP models (e.g., ViT/B-16) have the same depth $L$ for both text and image encoders. 
However, certain CLIP variants (e.g., ViT/L-14) may have different depths for the text and image encoders. Let \(L_t\) be the number of transformer layers in the text encoder, and let \(L_v\) be the number of transformer layers in the image encoder. We define \(L \leq \min(L_t, L_v)\) and apply our prompt learning formulation only up to layer \(L\). For any layer \(j > L\), we do not introduce additional learnable embeddings or perform mapping updates, and instead allow the subsequent layers to process the previous layer’s output embeddings directly.

Concretely, if $L_t > L$, then for the text encoder layers $j=L+1,\dots,L_t$, we update \cref{eq_maple_t} by:
\begin{align*}
[\bm{\theta}_{j+1},\,\mathbf{W}_j] 
= \mathbf{f}_j^{\mathrm{text}}([\bm{\theta}_j,\,\mathbf{W}_{j-1}]),
\end{align*}
that is, beyond the $L$-th layer, we do not introduce new embeddings nor apply additional mapping functions. 
A similar procedure holds for the image encoder, updated from \cref{eq_maple_v}, when $L_v > L$:
\begin{align*}
[\mathbf{c}_j,\,\mathbf{E}_j,\,\tilde{\bm{\theta}}_{j+1}]
= \mathbf{f}_j^{\mathrm{img}}([\mathbf{c}_{j-1},\,\mathbf{E}_{j-1},\,\tilde{\bm{\theta}}_{j}]).
\end{align*}

This design offers flexibility and ensures that, for models whose text and image encoders have unequal depths, prompt learning remains consistent up to the first $L$ layers. 
Beyond layer $L$, the subsequent layers simply propagate and refine existing prompts without further mapping. 
The key advantage of this architecture is that it {unifies} textual and visual features early in the network while keeping the higher-level representations relatively intact, thus leveraging CLIP’s existing pre-trained knowledge. 
By combining the proposed discriminative representation learning with multimodal prompt learning, we effectively align text and image features, promoting robustness and generalization to unseen generative models.

\section{AI-Generated Image Detection vs OOD Detection}
AI-Generated Image Detection is a specialized application focused specifically on identifying images created by generative models, aiming to expose deepfakes or synthetic media by detecting subtle artifacts or statistical fingerprints left during generation. In contrast, Out-of-Distribution (OOD) Detection~\cite{a,b,c,d,e,f,g,h,j,k,i,l,m} is a broader, fundamental machine learning capability designed to identify any input data that significantly deviates from the model's original training data distribution—whether it's an unknown object class, corrupted data, adversarial examples, or indeed AI-generated images (if the model wasn't trained on them). While AI-generated images often constitute OOD data for models trained solely on real images—making OOD detection techniques applicable—the former focuses narrowly on forensic authenticity verification, whereas the latter addresses general model robustness and safety when encountering novel or unexpected inputs in real-world deployment. Thus, AI-generated image detection can be viewed as a specialized branch of OOD detection, leveraging domain-specific knowledge of generative artifacts.

\section{Datasets and Experimental Settings} \label{sec:appendix_data_exp}

\subsection{Datasets}  \label{sec:appendix_datasets}
\textbf{UniversalFakeDetect dataset.} 
The UniversalFakeDetect dataset is largely composed of images produced by GANs and builds upon the ForenSynths dataset~\cite{DBLP:conf/cvpr/WangW0OE20}. 
Specifically, ForenSynths includes 720K samples: 360K real images and 360K generated ones, with ProGAN as the generator for training data. 
UniversalFakeDetect retains these training conditions but extends the test set to feature multiple generators drawn from ForenSynths: 
ProGAN~\cite{DBLP:conf/iclr/KarrasALL18}, CycleGAN~\cite{DBLP:conf/iccv/ZhuPIE17}, BigGAN~\cite{DBLP:conf/iclr/BrockDS19}, StyleGAN~\cite{DBLP:conf/cvpr/KarrasLA19}, GauGAN~\cite{DBLP:conf/cvpr/Park0WZ19}, StarGAN~\cite{DBLP:conf/cvpr/ChoiCKH0C18}, Deepfakes~\cite{DBLP:conf/iccv/RosslerCVRTN19}, SITD~\cite{DBLP:conf/icml/ChenSWJ18}, SAN~\cite{DBLP:conf/cvpr/DaiCZXZ19}, CRN~\cite{DBLP:conf/iccv/ChenK17}, and IMLE~\cite{DBLP:conf/iccv/LiZM19}. 
Additionally, the dataset incorporates images generated by three diffusion models (Guided Diffusion~\cite{DBLP:conf/nips/DhariwalN21}, GLIDE~\cite{DBLP:conf/icml/NicholDRSMMSC22}, LDM~\cite{DBLP:conf/cvpr/RombachBLEO22}) and one autoregressive model (DALL-E 2~\cite{DBLP:conf/icml/RameshPGGVRCS21}), further expanding upon ForenSynths’ foundation.

\textbf{GenImage dataset.}
GenImage primarily employs diffusion models to generate synthetic images. It draws on real data and labels from ImageNet and relies on Stable Diffusion V1.4 to create its training samples, consisting of fake images and their real counterparts. 
At test time, a diverse set of image generators is included: Stable Diffusion V1.4~\cite{DBLP:conf/cvpr/RombachBLEO22}, Stable Diffusion V1.5~\cite{DBLP:conf/cvpr/RombachBLEO22}, GLIDE~\cite{DBLP:conf/icml/NicholDRSMMSC22}, VQDM~\cite{DBLP:conf/cvpr/GuCBWZCYG22}, Wukong~\cite{DBLP:conf/nips/GuMLHMLYHZJXX22}, BigGAN~\cite{DBLP:conf/iclr/BrockDS19}, ADM~\cite{DBLP:conf/nips/DhariwalN21}, and Midjourney~\cite{midjourney}. 
Altogether, GenImage consists of 1,331,167 real images and 1,350,000 synthetic images. In line with \cite{DBLP:conf/nips/ZhuCYHLLT0H023}, we train on all images produced by Stable Diffusion V1.4 (and the corresponding real images) and then evaluate against all other listed generators. We also include degraded classification experiments on this dataset.

\subsection{Additional Implementation Details}  \label{sec:appendix_imple_details}
We implement \methodName\ using a pre-trained ViT-L/14 CLIP model. 
For multimodal prompt learning, we set the number of learnable embeddings $B=2$ and apply mapping functions up to $L=9$ transformer layers in both the text and image encoders. 
All experiments are trained for 10 epochs with a batch size of 128 and an initial learning rate of 0.002, using SGD with a cosine annealing decay schedule~\cite{DBLP:conf/iclr/LoshchilovH17}, and run on a single NVIDIA L40 GPU.

We utilize the entire training set in GenImage, comprising 162k real and 162k fake images, and fix $\alpha=0.1$ while setting the memory bank size $M=64$.
For the UniversalFakeDetect dataset, following \citet{DBLP:conf/mir/KhanD24}, we reduce the training set to 100k real and 100k fake images (out of the original 360k each) due to the lesser impact of large data size on performance. 
In this setting, we choose $\alpha=0.6$ and setting $M=64$. 

\begin{figure}[t]
    \centering
  \includegraphics[page=1, width=\linewidth]{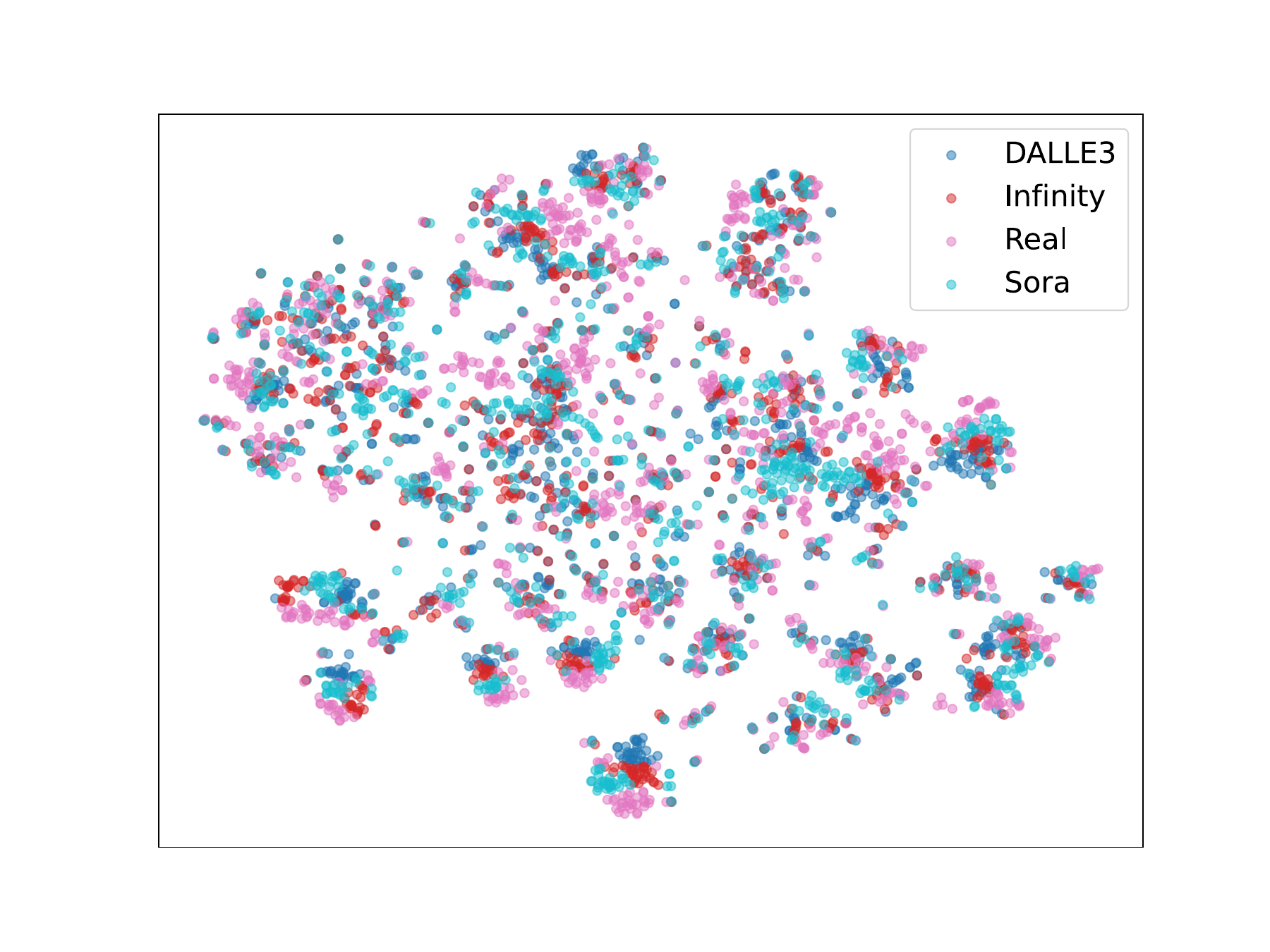}
  \Description{Distribution Analysis via t-SNE.}
  \caption{Distribution Analysis via t-SNE.}
  \label{fig:tsne_visualization_sora}
\end{figure}

\begin{table}[t]
  \centering
  \caption{Zero-shot performance on FLUX.1-dev and SD 3.5.}
  \label{tab:sd35_flux}
  \begin{tabular}{lcccccc}
    \toprule
    & \multicolumn{2}{c}{\textbf{FLUX.1-dev}} 
    & \multicolumn{2}{c}{\textbf{SD 3.5}} 
    & \multicolumn{2}{c}{\textbf{Average}} \\
    \cmidrule(lr){2-3}\cmidrule(lr){4-5}\cmidrule(lr){6-7}
    \textbf{Method}      & Acc   & mAP   & Acc   & mAP   & Acc   & mAP \\
    \midrule
    UnivFD               & 50.4  & 58.0  & 56.7  & 83.7  & 53.5  & 70.8 \\
    CLIPping             & 76.0  & 90.9  & 88.3  & 96.0  & 82.2  & 93.5 \\
    \textbf{MiraGe (Ours)} & \textbf{93.9} & \textbf{99.1} 
                         & \textbf{93.5} & \textbf{98.4} 
                         & \textbf{93.7} & \textbf{98.7} \\
    \bottomrule
  \end{tabular}
\end{table}

\subsection{Additional Details of Collected Datasets Sora, DALLE-3, and Infinity}
\begin{figure*}[t!]
    \centering
    \begin{minipage}{0.24\textwidth}
        \centering
        {\normalsize Real} 
        \includegraphics[width=\textwidth]{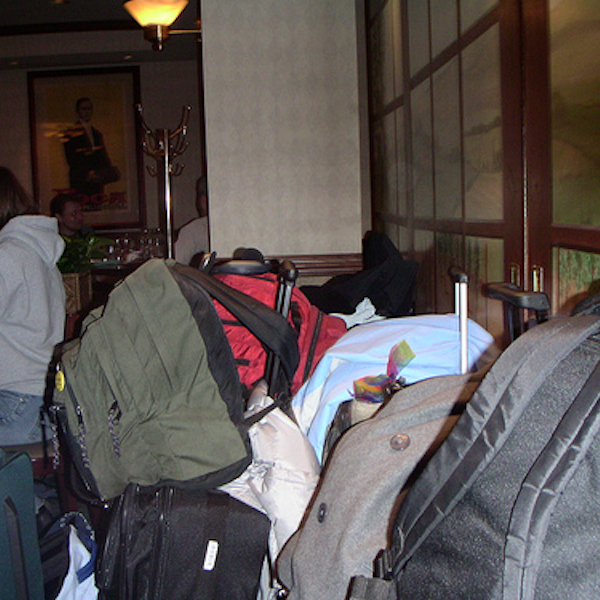}
    \end{minipage}
    \begin{minipage}{0.24\textwidth}
        \centering
        {\normalsize Sora} 
        \includegraphics[width=\textwidth]{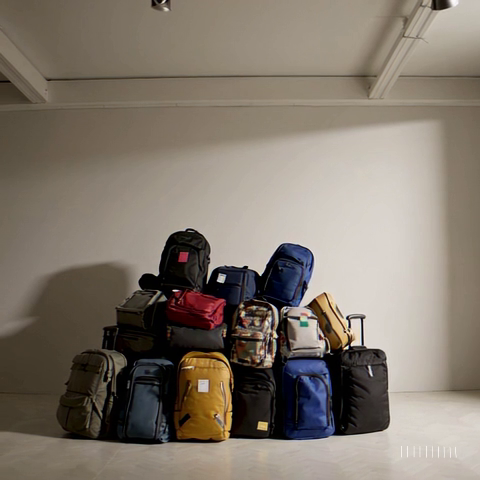}
    \end{minipage}
    \begin{minipage}{0.24\textwidth}
        \centering
        {\normalsize DALLE-3} 
        \includegraphics[width=\textwidth]{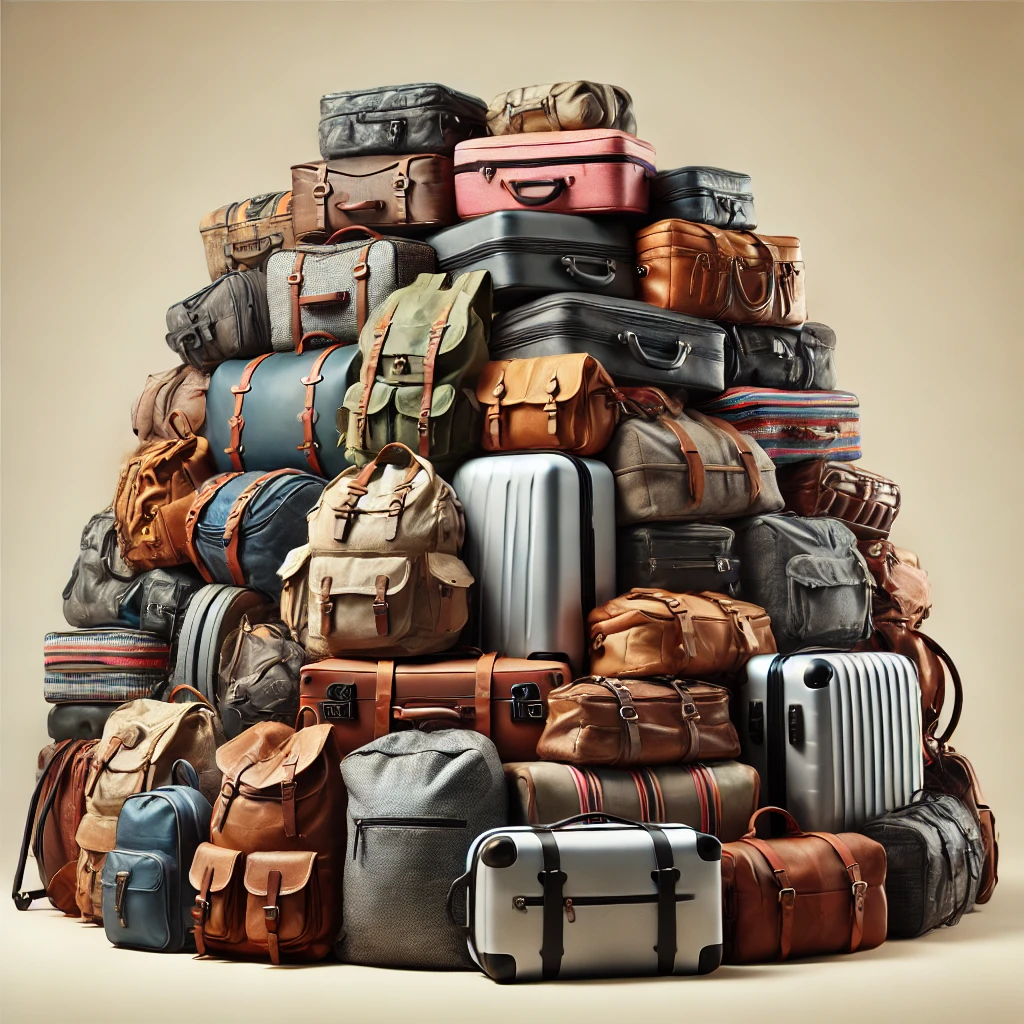}
    \end{minipage}
    \begin{minipage}{0.24\textwidth}
        \centering
        {\normalsize Infinity} 
        \includegraphics[width=\textwidth]{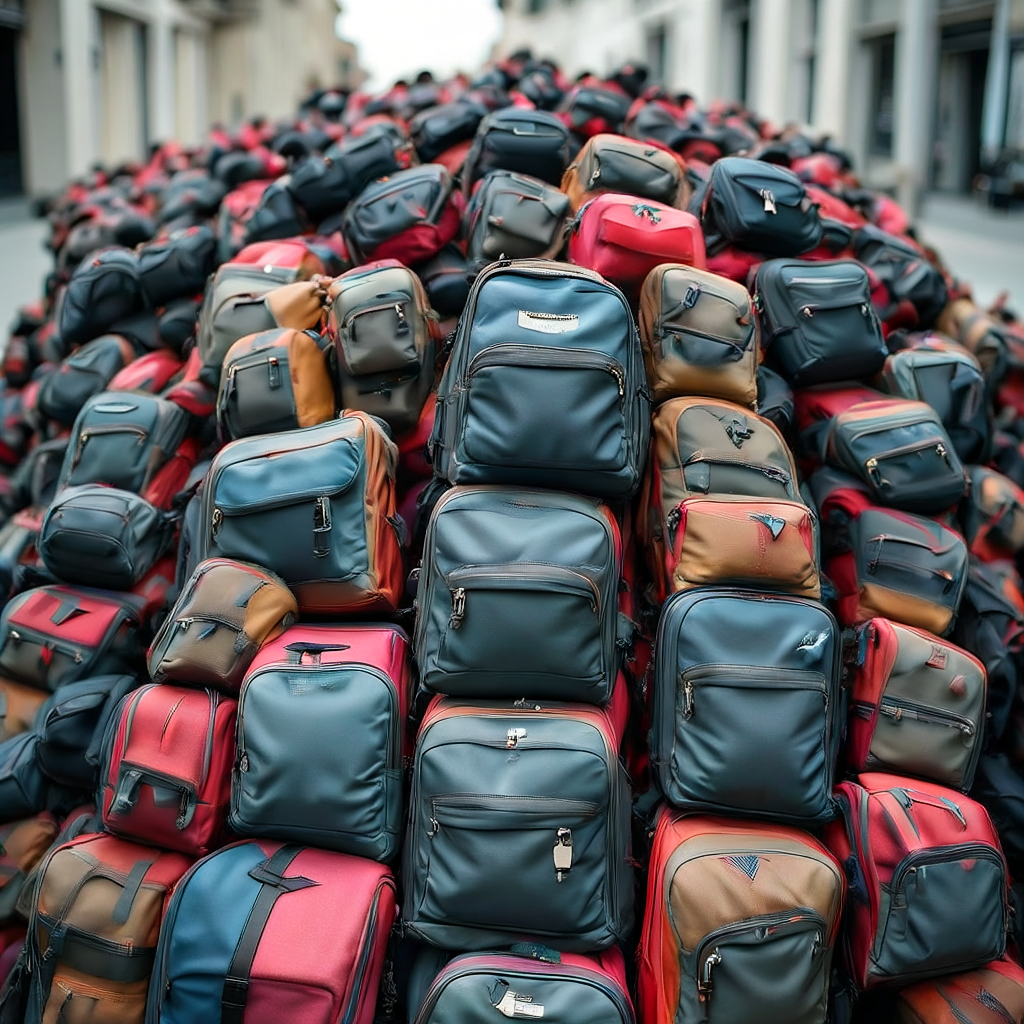}
    \end{minipage}
    Caption: Many backpacks and luggage bags piled up on top of each other.

    \begin{minipage}{0.24\textwidth}
        \centering
        \includegraphics[width=\textwidth]{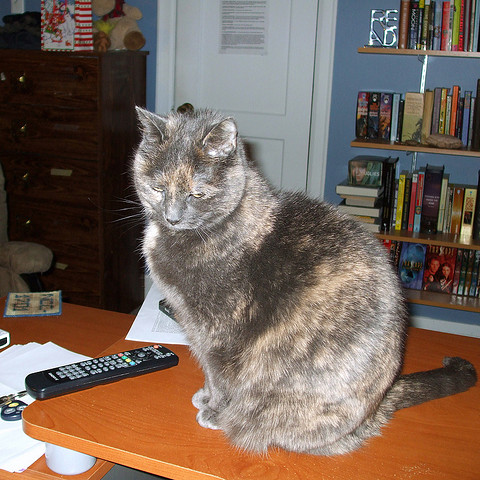}
    \end{minipage}
    \begin{minipage}{0.24\textwidth}
        \centering
        \includegraphics[width=\textwidth]{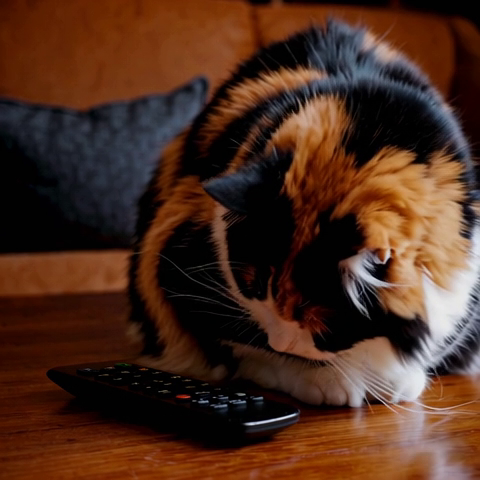}
    \end{minipage}
    \begin{minipage}{0.24\textwidth}
        \centering
        \includegraphics[width=\textwidth]{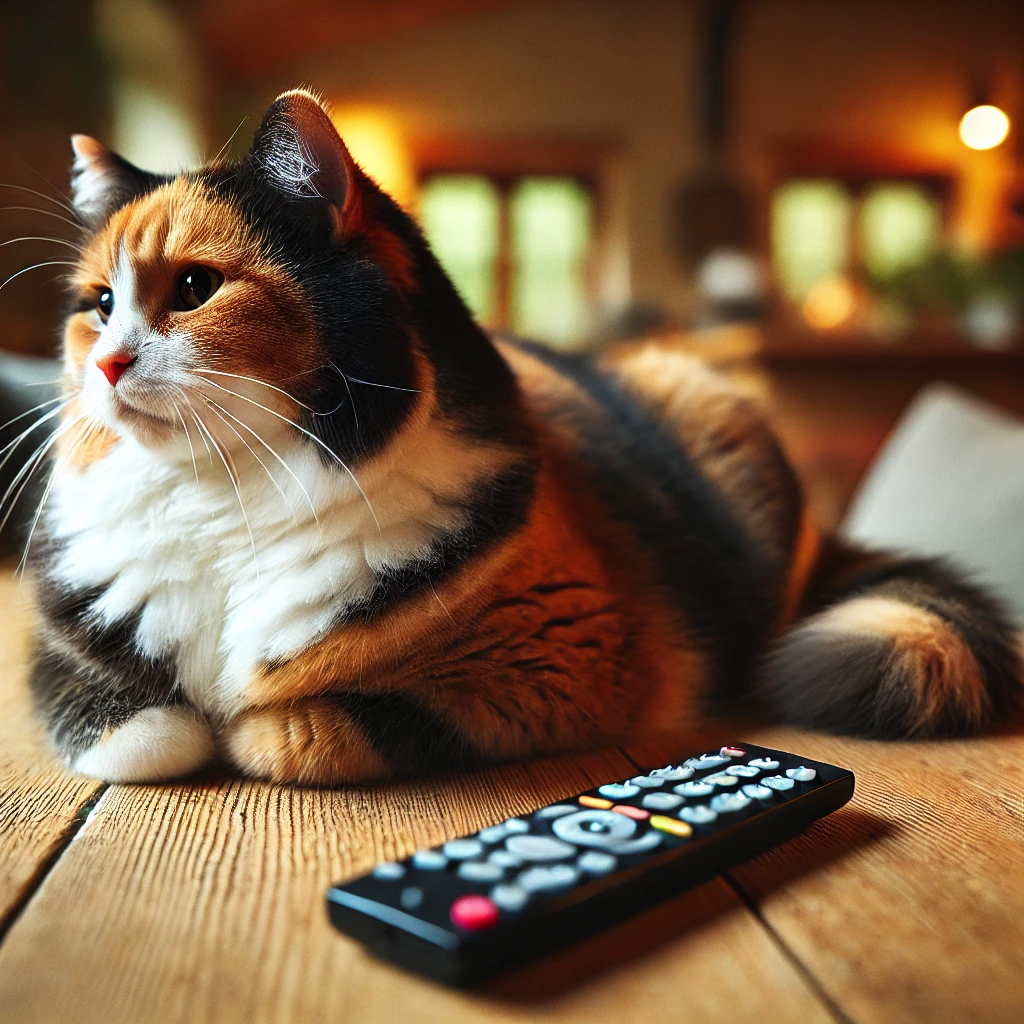}
    \end{minipage}
    \begin{minipage}{0.24\textwidth}
        \centering
        \includegraphics[width=\textwidth]{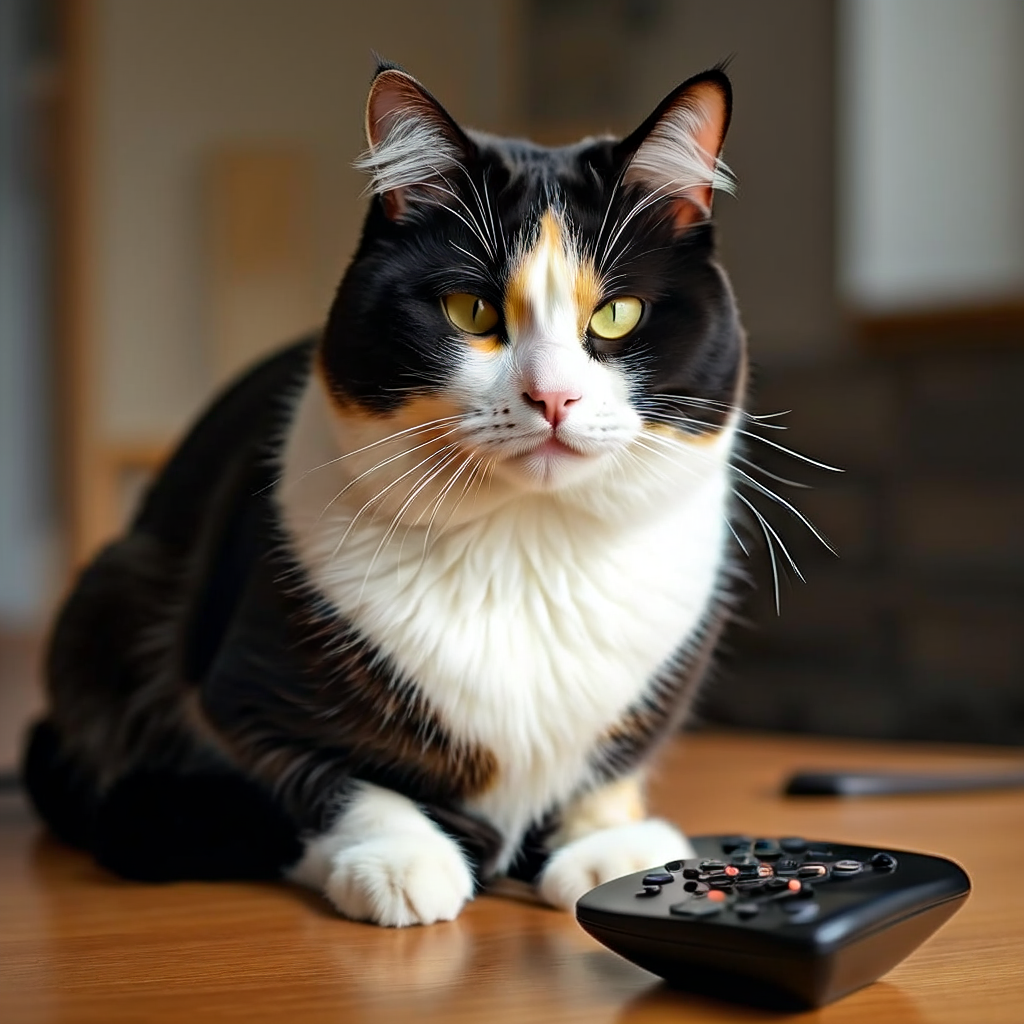}
    \end{minipage}
    Caption: A large calico cat sitting on a wooden table next to a remote control.

    \begin{minipage}{0.24\textwidth}
        \centering
        \includegraphics[width=\textwidth]{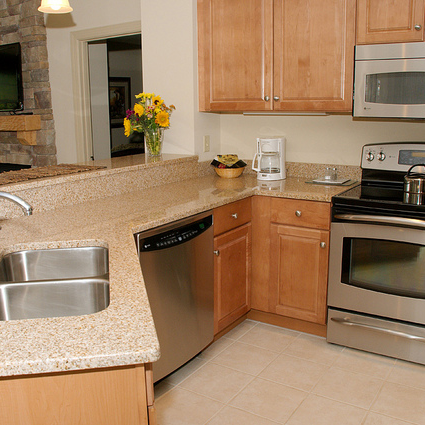}
        \end{minipage}
    \begin{minipage}{0.24\textwidth}
        \centering
        \includegraphics[width=\textwidth]{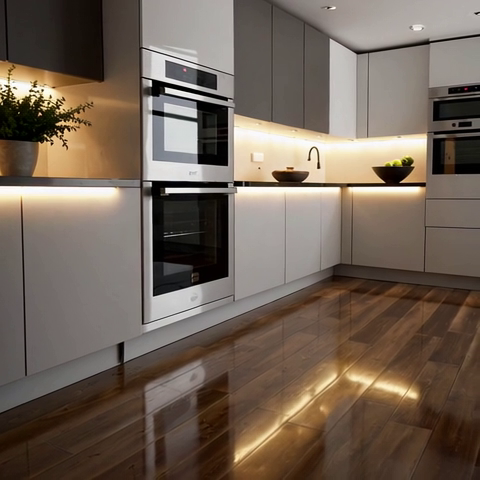}
    \end{minipage}
    \begin{minipage}{0.24\textwidth}
        \centering
        \includegraphics[width=\textwidth]{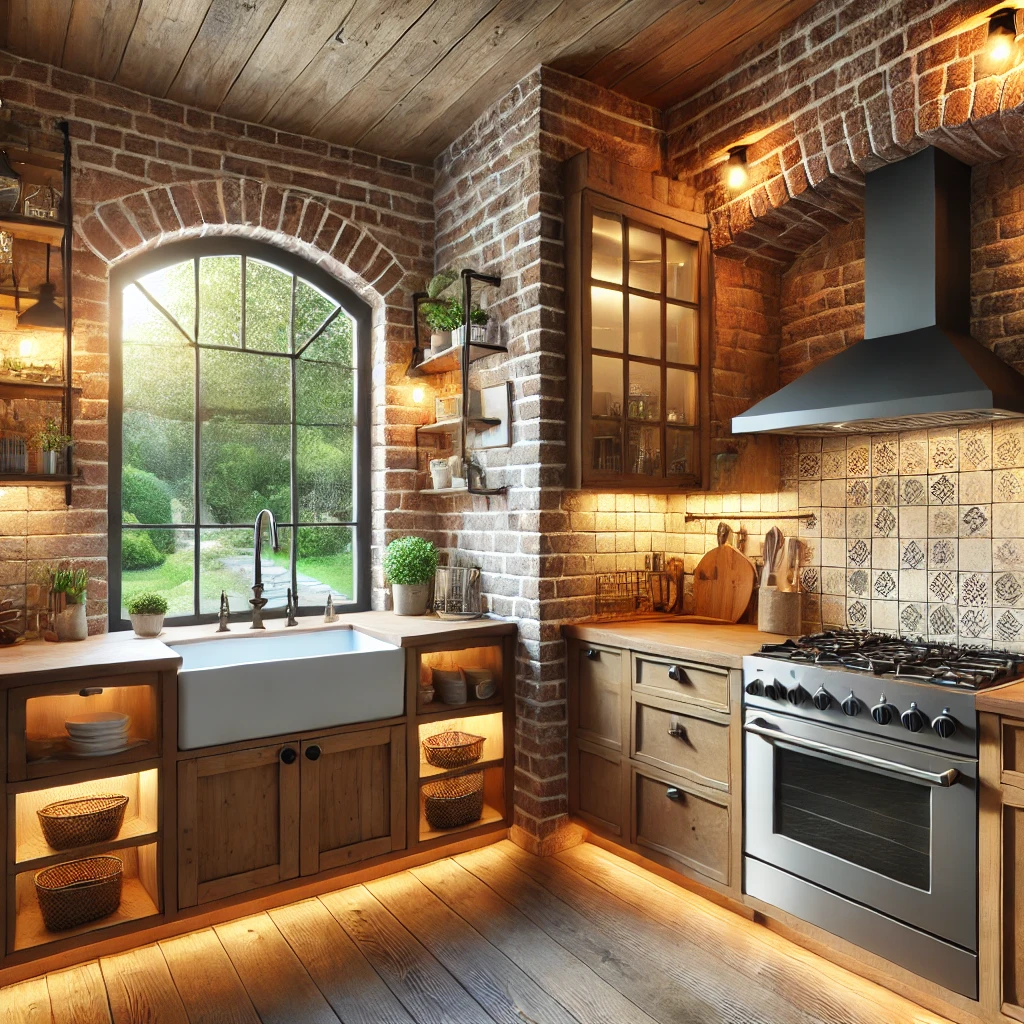}
    \end{minipage}
    \begin{minipage}{0.24\textwidth}
        \centering
        \includegraphics[width=\textwidth]{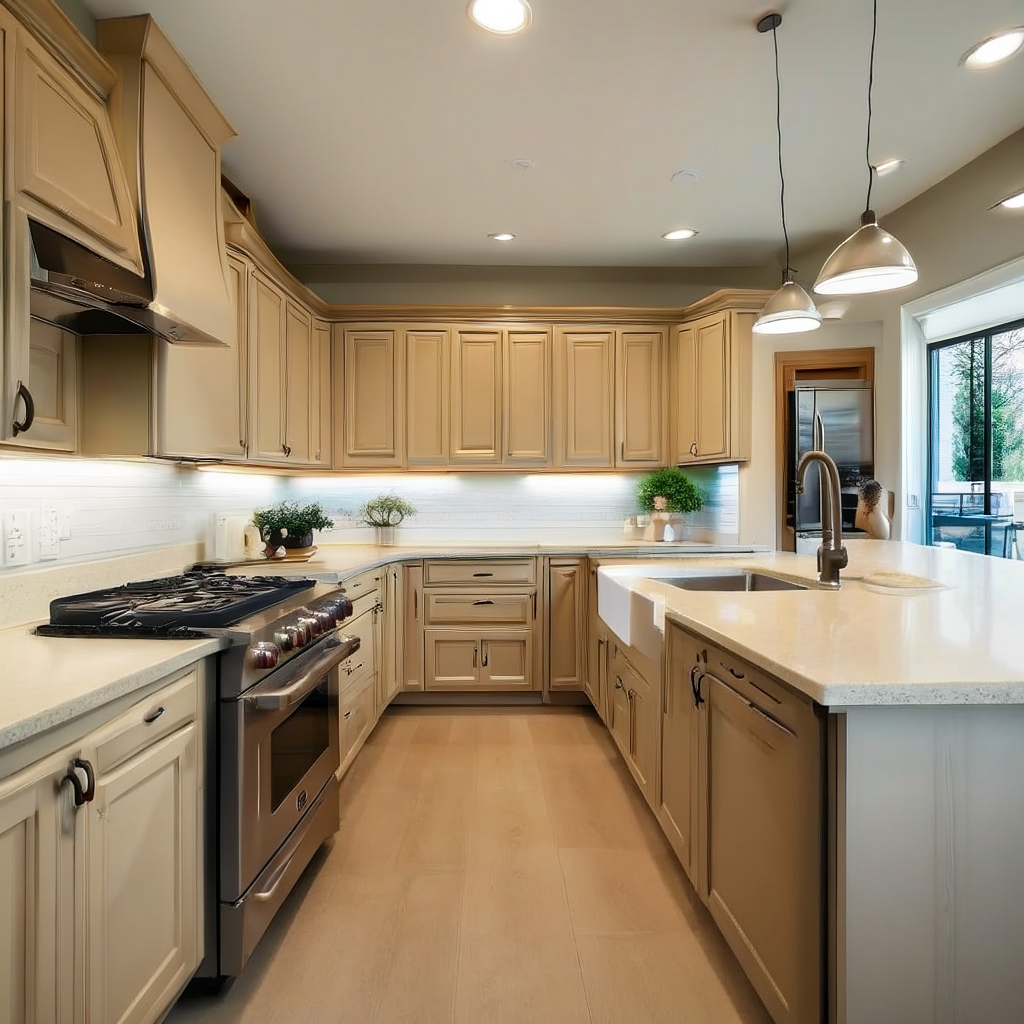}
    \end{minipage}
    Caption: A kitchen has light wooden colored cabinetry, beige countertops, and stainless steel appliances in an open house.
    
    \Description{
    Examples of generated images alongside real images from the MSCOCO dataset.
    }
    \caption{
    Examples of generated images alongside real images from the MSCOCO dataset. The text prompts used for generating the images are displayed below each example. The datasets were constructed using Sora, DALLE-3, and Infinity.
    }
    \label{fig:generated_images}
\end{figure*}
We include three types of AI-generated images from the commercial models Sora \cite{brooks2024video} and DALL-E 3 \cite{betker2023improving}, as well as the emerging AutoRegressive model Infinity \cite{han2024infinityscalingbitwiseautoregressive}. These models are chosen for their distinct generative architectures and diverse output styles, thereby providing a challenging testbed for evaluating the robustness and generalizability of our detector. 

Sora is a commercial text-to-image model specialized in generating high-quality illustrations and artistic renditions. Compared to standard diffusion-based models, Sora often produces more stylized or painterly outputs, which can pose unique challenges for detectors relying on traditional pixel-level artifacts.
DALL-E 3 builds upon OpenAI’s family of generative models, with improvements in resolution and semantic coherence. It leverages a transformer architecture to translate textual prompts into a wide range of visual concepts. Its strong textual-semantic alignment can make detection more difficult, since less-obvious artifacts may be present.
Infinity is a state-of-the-art AutoRegressive model aimed at generating complex scenes. Unlike diffusion-based approaches, it accumulates content incrementally, potentially creating subtle artifacts different from those seen in diffusion outputs.

Following \citet{DBLP:conf/icml/ChenZYY24}, we use the MSCOCO dataset \cite{DBLP:conf/eccv/LinMBHPRDZ14} as our source of real images. We select images and annotations from the 2017 validation set of MSCOCO. We randomly sample 1000 real images from the validation set to represent the real class. Each image in MSCOCO is paired with multiple captions, and we retain the longest caption to preserve maximum textual context for AI image generation. Using these retained captions, we generate 1000 images from each of the three generators (Sora, DALL-E 3, and Infinity), resulting in a total of 3000 AI-generated images.

This procedure ensures a fair comparison between real and generated images while providing a comprehensive benchmark for evaluating generative models. The diversity in approaches, including commercial text-to-image models and an AutoRegressive architecture, enables testing across a wide range of synthetic artifacts, from stylized illustrations to photo-realistic scenes. Retaining the longest captions enriches semantic context, resulting in coherent and realistic outputs, thus increasing detection challenges. This strategy ensures both real and synthetic images span diverse semantic and visual content, thoroughly assessing detector performance.
Some examples of generated images are in \cref{fig:generated_images}.

In addition, we conduct a feature distribution analysis to check the diversity of our sample selection. 
Specifically, we use zero-shot CLIP to embed all 4,000 images (1,000 each from Sora, DALL-E 3, Infinity, and MSCOCO) and then apply t-SNE to project them into a 2D space. 
As illustrated in \cref{fig:tsne_visualization_sora}, images from different generators are broadly interspersed across the semantic space, rather than clustering into narrow or trivial subsets. 
This outcome indicates that our samples capture a wide spectrum of real-world scenarios and generative styles, rather than merely reflecting a convenient subset.
Consequently, our cross-dataset evaluation more faithfully tests the generalization capabilities of each model.

\subsection{Additional Details of Collected Datasets FLUX and SDv3.5}
In addition to classic benchmarks (UniversalFakeDetect, GenImage) and the new Chameleon dataset, and beyond the cross-dataset tests on Sora, DALL-E 3 and Infinity, we have evaluated MiraGe zero-shot on two state-of-the-art generative models, FLUX.1-dev \cite{DBLP:journals/corr/abs-2506-15742} and Stable Diffusion 3.5 \cite{DBLP:conf/icml/EsserKBEMSLLSBP24}), using the same MS-COCO caption protocol (1000 images per model). All detectors remain trained on SD v1.4 without any further fine-tuning.

The result is shown in \cref{tab:sd35_flux}.
Even without any additional training, MiraGe maintains over 93\% accuracy and nearly 99\% mAP on both newly released models, substantially outperforming prior methods. This confirms that our multimodal discriminative representation learning and prompt-based alignment generalize not only across different architectures (diffusion, autoregressive, commercial APIs) but also across successive versions within the same family of generative models.  

\subsection{Correlation between CLIP Score and Detection Accuracy}

We performed a preliminary study to examine how the perceptual quality of generated images affects MiraGe’s detection performance. Specifically, for three recent text-to-image models (Infinity \cite{han2024infinityscalingbitwiseautoregressive}, FLUX.1-dev \cite{DBLP:journals/corr/abs-2506-15742}, and Stable Diffusion 3.5 \cite{DBLP:conf/icml/EsserKBEMSLLSBP24}), we generated 1,000 images each, computed their zero-shot CLIP Scores against the original prompts, and then measured MiraGe’s accuracy on all 3,000 samples. The results are shown in Table~\ref{tab:clipscore}.

\begin{table}[t]
  \centering
  \caption{Correlation between CLIP Score and MiraGe detection accuracy.}
  \label{tab:clipscore}
  \begin{tabular}{lccc}
    \toprule
    Model         & Infinity & FLUX.1-dev & SDv3.5 \\
    \midrule
    CLIP Score    & 0.2624   & 0.2647     & 0.2718 \\
    Accuracy (\%) & 97.5     & 93.9       & 93.5   \\
    \bottomrule
  \end{tabular}
\end{table}

We observe a clear inverse relationship: higher CLIP Scores (indicating closer alignment with the text prompt and more realistic appearance) correspond to lower detection accuracy. This supports the intuitive hypothesis that “the more realistic an AI-generated image, the harder it is to detect.” 

\section{Additional Results on Generalizability} \label{sec:appendix_additional_experiments}
We present additional generalizability results, including comprehensive evaluations on the newly emerging and challenging Chaleneon dataset \cite{DBLP:journals/corr/abs-2406-19435}, as well as extensive degradation studies on the GenImage dataset under conditions such as low resolution, JPEG compression, and Gaussian blurring, and a detailed comparative table for the GenImage dataset.

\begin{table*}[t!]
\caption{
Comparisons on the Chameleon dataset.
Accuracy (\%) of various methods in detecting generated images on the Chameleon dataset.
For each training dataset, the first row presents the overall accuracy on the Chameleon test set, while the second row provides a detailed breakdown as ``fake image / real image accuracy.'' The best results are highlighted in \textbf{bold}, and the second-best are \underline{underlined}.
}
\resizebox{\hsize}{!}
{
\normalsize  

\begin{tabular}{c|cccccccc|c}
\toprule
{Training}              & {CNNSpot}  & {Fusing} & {GramNet} & {LNP}  & {UnivFD } & {DIRE} & {NPR} & {AIDE} & \methodName \\ 
Dataset & \cite{DBLP:conf/cvpr/WangW0OE20} & \cite{DBLP:conf/icip/JuJKXNL22} & \cite{DBLP:conf/cvpr/LiuQT20} & \cite{DBLP:conf/eccv/LiuYBXLG22} & \cite{DBLP:conf/cvpr/OjhaLL23} & \cite{DBLP:conf/iccv/WangBZWHCL23} & \cite{DBLP:conf/cvpr/TanLZWGLW24} & \cite{DBLP:journals/corr/abs-2406-19435}  & (Ours) \\
\midrule
\multirow{2}{*}{{ProGAN}}       & 56.94            & 56.98    & \textbf{58.94}      & 57.11      & 57.22       & \underline{58.19}   & 57.29       & 56.45   & 57.73       \\
                                       & 0.08 / 99.67       & 0.01 / 99.79    & 4.76 / 99.66      & 0.09 / 99.97       & 3.18 / 97.83    & 3.25 / 99.48                         & 2.20 / 98.70   & 0.63 / 98.46 & 1.70 / 99.80     \\ 
                                    \cmidrule(l){1-10} 
\multirow{2}{*}{{SD v1.4}}      & 60.11            & 57.07    & 60.95      & 55.63     & {55.62}       & 59.71     & 58.13    & \underline{61.10}    & \textbf{69.06}      \\
                                       & 8.86 / 98.63                      & 0.00 / 99.96    & 17.65 / 93.50     & 0.57 / 97.01      & 74.97 / 41.09      & 11.86 / 95.67                 & 2.43 / 100.00   & 16.82 / 94.38   & 29.73 / 98.67  \\ \cmidrule(l){1-10} 
\multirow{2}{*}{{All GenImage}} & 60.89          & 57.09    & 59.81      & 58.52       & 
{60.42}       & 57.83      & 57.81   & \underline{63.89}  & \textbf{71.75}        \\
                                       & 9.86 / 99.25    & 0.89 / 99.55    & 8.23 / 98.58      & 7.72 / 96.70       & 85.52 / 41.56      & 2.09 / 99.73                      & 1.68 / 100.00  & 22.40 / 95.06  &  35.91 / 98.68  \\ 
\bottomrule
\end{tabular}
}

\label{table:chameleon}
\end{table*} 

\subsection{Comparisons on Chameleon} 
To evaluate the performance of our proposed method, we utilized the Chameleon dataset, a recently introduced benchmark specifically designed to address the limitations of existing AI-generated image detection datasets. Chameleon stands out for its high realism, diversity, and resolution, making it particularly suitable for realistic and challenging evaluations.
Unlike other datasets that often include AI-generated images with evident artifacts or simplistic prompts, the Chameleon dataset features images that have undergone extensive manual adjustments by AI artists and photographers, ensuring they are highly deceptive to human perception. The dataset also includes a wide variety of categories—human, animal, object, and scene—spanning diverse real-world scenarios. Additionally, all images in Chameleon are high-resolution (up to 4K), providing a rigorous test of the detector's ability to identify subtle differences between AI-generated and real images.
We chose the Chameleon dataset because of its ability to expose the weaknesses of existing detection models. Its images have passed the human ``Turing Test,'' highlighting their resemblance to real-world photography and posing a significant challenge to state-of-the-art detection methods. By incorporating this dataset, we aim to demonstrate the robustness and generalizability of our method in identifying AI-generated images under realistic and demanding conditions.

The results in \cref{table:chameleon} reveal that accuracy declines for all methods on the Chameleon dataset, underscoring its challenging nature with highly realistic and diverse content that closely mimics real photography. While \methodName\ achieves over 90\% accuracy on GenImage and UniversalFakeDetect, its performance drops to 69.06\% on Chameleon. Nevertheless, \methodName\ consistently outperforms all baselines, achieving the highest accuracy when trained on both the SD v1.4 subset (69.06\%) and the full GenImage dataset (71.75\%). Notably, expanding the training data from SD v1.4 to all GenImage subsets further improves performance, highlighting \methodName's ability to leverage diverse generative data for enhanced generalizability. Although \methodName\ does not achieve state-of-the-art results when trained on ProGAN, it still performs competitively, reflecting its adaptability across varied training scenarios. These findings collectively validate the robustness and strong generalization capability of \methodName, even when tested against a complex, high-realism dataset like Chameleon.

\textbf{Discussion.} Although \methodName\ attains the highest accuracy on Chameleon, its performance still hovers around 69--71\%, underscoring the dataset’s deliberately ``human-deceptive'' nature. The AI-generated images in Chameleon have undergone extensive manual refinement by AI artists and photographers to pass the human Turing test, resulting in very few overt generative artifacts. 
Furthermore, the dataset spans a wide array of real-world scenarios---far broader than conventional benchmarks---while offering resolutions from 720P up to 4K. 
Such high-fidelity content demands exceptionally nuanced, fine-grained analysis to tease apart subtle differences between real and synthetic imagery. 
Consequently, these findings reveal a gap between the theoretical strengths of discriminative representation learning and the real-world challenges of detecting highly realistic, diverse, and high-resolution AI-generated images, highlighting an urgent need for further research in robust AI-generated image detection.

\subsection{Robustness Against Degrated Image} 
\begin{table*}[t!]
    \centering
    \caption{
    Performance evaluation on degraded images. Models are trained and tested on the SD V1.4 subset of the GenImage dataset under various degradation scenarios, including low resolution (LR), JPEG compression, and Gaussian blur. The best results are highlighted in \textbf{bold}, and the second-best are \underline{underlined}.
    }
	\label{tab:perturbed_images}
    {
		\begin{tabular}{cccccccc}
			\toprule
			\multicolumn{1}{c|}{\multirow{2}{*}{Method}} & \multicolumn{6}{c|}{Testing Subset}       & \multicolumn{1}{c}{\multirow{2}{*}{\begin{tabular}[c]{@{}c@{}}Avg\\ Acc.(\%)\end{tabular}}} \\
			\multicolumn{1}{c|}{}   & LR (112) & LR (64) & JPEG (QF=65) & JPEG (QF=30) & Blur ($\sigma$=3) & \multicolumn{1}{c|}{Blur ($\sigma$=5)} & \multicolumn{1}{c}{}                                                                       \\ \hline
            
			\multicolumn{1}{l|}{Spec \cite{DBLP:conf/wifs/0022KC19} }             & 50.0       & 49.9    & 50.8        & 50.4        & 49.9              & \multicolumn{1}{c|}{49.9}              & 50.1                                                                                       \\
			\multicolumn{1}{l|}{F3Net \cite{DBLP:conf/eccv/QianYSCS20}}          & 50.0       & 50.0      & 89.0          & 74.4        & 57.9              & \multicolumn{1}{c|}{51.7}              & 62.1                                                                                       \\
			\multicolumn{1}{l|}{Swin-T \cite{DBLP:conf/iccv/LiuL00W0LG21} }                  & \underline{97.4}     & 54.6    & 52.5        & 50.9        & 94.5              & \multicolumn{1}{c|}{52.5}              & 67.0                                                                                       \\
			\multicolumn{1}{l|}{DIRE  \cite{DBLP:conf/iccv/WangBZWHCL23} }              & {64.1}     & {53.5}    &85.4        & 65.0        & 88.8              & \multicolumn{1}{c|}{{56.5}}              & {68.9}                                                                                       \\   
			\multicolumn{1}{l|}{DeiT-S \cite{DBLP:conf/icml/TouvronCDMSJ21}}                  & 97.1     & 54.0      & 55.6        & 50.5        & 94.4              & \multicolumn{1}{c|}{67.2}              & 69.8                                                                                       \\
			\multicolumn{1}{l|}{ResNet-50 \cite{DBLP:conf/cvpr/HeZRS16} }                & 96.2     & 57.4    & 51.9        & 51.2        & \textbf{97.9}              & \multicolumn{1}{c|}{69.4}              & 70.6                                                                                       \\
			\multicolumn{1}{l|}{CNNDet  \cite{DBLP:conf/cvpr/WangW0OE20}}       & 50.0       & 50.0      & \textbf{97.3}        & \textbf{97.3}        & \underline{97.4}              & \multicolumn{1}{c|}{{77.9}}              & {78.3}                                                                                       \\
			\multicolumn{1}{l|}{UnivFD  \cite{DBLP:conf/cvpr/OjhaLL23} }              & {88.2}     & {78.5}    &85.8        & 83.0        & 69.7              & \multicolumn{1}{c|}{{65.7}}              & {78.3} \\ 
			\multicolumn{1}{l|}{GramNet \cite{DBLP:conf/cvpr/LiuQT20} }                  & \textbf{98.8}     & \textbf{94.9}    & 68.8        & 53.4        & 95.9              & \multicolumn{1}{c|}{\underline{81.6}}              & \underline{82.2}                      \\                                                    
                \midrule
			\rowcolor[HTML]{EFEFEF} \multicolumn{1}{l|}{\methodName\ (Ours)}    & {92.9}  & \underline{80.4} & \underline{95.7}       & \underline{93.9}       & \underline{97.4}      & \multicolumn{1}{c|}{\textbf{90.9}}             & \textbf{91.9}                                                                                       \\ 			
			
			\bottomrule

		\end{tabular}
	}
\end{table*}
In real-world scenarios, images frequently undergo perturbations such as resolution reduction, JPEG compression, and blurring during transmission and interaction~\cite{DBLP:conf/cvpr/WangW0OE20}. To assess how these degradations affect AI-generated image detection, we downsample images to resolutions of 112 and 64, apply JPEG compression with quality factors (QF) of 65 and 30, and introduce Gaussian blur with \(\sigma\)=3 and \(\sigma\)=5. As shown in \cref{tab:perturbed_images}, these disruptions weaken the discriminative artifacts of generative models, making it more difficult to differentiate real from AI-generated images and substantially reducing the performance of existing detectors.

To enhance robustness to such unseen perturbations, we employ an array of data augmentations during training, including random crops and resizes, Gaussian noise, Gaussian blur, random rotations, JPEG compression with random quality, brightness and contrast adjustments, and random grayscale conversions. Despite the demanding conditions of low-resolution input, strong compression artifacts, and severe blurring, our method maintains the highest average accuracy of 91.9\%. This superior performance underscores the effectiveness of our multimodal design in capturing and utilizing both semantic and noise-related cues, even when pixel distributions are heavily distorted.

\begin{table*}[t]
    \centering
    \caption{
    Comprehensive comparison of accuracy (\%) between our method and other methods. All methods were trained on the GenImage SDv1.4 dataset and evaluated across different testing subsets. The best results are highlighted in \textbf{bold}, and the second-best are \underline{underlined}.
    }
    {
    \begin{tabular}{lccccccccc}
        \toprule
        {Method} & {Midjourney} & {SDv1.4} & {SDv1.5} & {ADM} & {GLIDE} & {Wukong} & {VQDM} & {BigGAN} & {Avg (\%)} \\
        \midrule
        CNNDet \cite{DBLP:conf/cvpr/WangW0OE20}          & 52.8 & 96.3 & {99.5} & 50.1 & 39.8 & 78.6 & 53.4 & 46.8 & 64.7  \\
        F3Net \cite{DBLP:conf/eccv/QianYSCS20}           & 50.1 & 99.2 & \textbf{99.9} & 49.9 & 39.0 & \underline{99.1} & 60.9 & 48.9 & 68.7  \\
        Spec \cite{DBLP:conf/wifs/0022KC19}              & 52.0 & 99.4 & 99.2 & 49.7 & 48.9 & 94.8 & 55.6 & 49.6 & 68.8  \\
        GramNet \cite{DBLP:conf/cvpr/LiuQT20}            & 54.2 & 99.2 & 99.1 & 50.3 & 54.6 & 98.0 & 50.8 & 51.7 & 69.9  \\
        DIRE \cite{DBLP:conf/iccv/WangBZWHCL23}          & 50.4 & \textbf{100.0} & \textbf{99.9} & 52.5 & 62.7 & 56.5 & 52.4 & 59.5 & 71.2 \\
        DeiT-S \cite{DBLP:conf/icml/TouvronCDMSJ21}      & 55.6 & \underline{99.9} & \textbf{99.9} & 49.8 & 58.1 & {98.9} & 56.9 & 53.5 & 71.6  \\
        ResNet-50 \cite{DBLP:conf/cvpr/HeZRS16}          & 54.9 & \underline{99.9} & 99.7 & 53.5 & 61.9 & 98.2 & 56.6 & 52.0 & 72.1  \\
        Swin-T \cite{DBLP:conf/iccv/LiuL00W0LG21}        & 62.1 & \underline{99.9} & \textbf{99.9} & 49.8 & 67.6 & \underline{99.1} & 62.3 & 57.6 & 74.8  \\
        UnivFD \cite{DBLP:conf/cvpr/OjhaLL23}            & \underline{91.5} & 96.4 & 96.1 & 58.1 & 73.4 & 94.5 & 67.8 & 57.7 & 79.4  \\
        GenDet \cite{DBLP:journals/corr/abs-2312-08880}  & 89.6 & 96.1 & 96.1 & 58.0 & 78.4 & 92.8 & 66.5 & 75.0 & 81.6  \\
        CLIPpping \cite{DBLP:conf/mir/KhanD24}           & 76.2 & 93.2 & 92.8 & 71.6 & 87.5 & 83.3 & 75.4 & 75.8 & 82.0  \\
        De-fake \cite{DBLP:conf/ccs/ShaLYZ23}            & 79.9 & 98.7 & 98.6 & 71.6 & 70.9 & 78.3 & 74.4 & \underline{84.7} & 84.7  \\
        LaRE \cite{DBLP:conf/cvpr/LuoDYD24}	    & 74.0 	& \textbf{100.0} 	& \textbf{99.9} 	& 61.7 	& 88.5 	& \textbf{100.0} & \textbf{97.2} 	& 68.7  & 86.2 \\
        AIDE \cite{DBLP:journals/corr/abs-2406-19435}    & 79.4 & 99.7 & \underline{99.8} & 78.5 & \underline{91.8} & 98.7 & 80.3 & 66.9 & 86.9  \\
        DRCT \cite{DBLP:conf/icml/ChenZYY24}             & \underline{91.5} & 95.0 & 94.4 & {79.4} & 89.2 & {94.7} & {90.0} & 81.7 & {89.5} \\
        ESSP \cite{DBLP:journals/corr/abs-2402-01123}    & 82.6 & 99.2 & 99.3 & 78.9 & 88.9 & 98.6 & \underline{96.0} & 73.9 & 89.7  \\
        NPR \cite{DBLP:conf/cvpr/TanLZWGLW24}            & \textbf{91.7} & 97.4 & 94.4 & \textbf{87.8} & \textbf{93.2} & 94.0 & 88.7 & 80.7 & \underline{91.0}  \\
        \midrule
        \rowcolor[HTML]{EFEFEF} \methodName\ (Ours)      & 83.2 & 98.8 & 98.5 & \underline{82.7} & 91.3 & {97.6} & {92.4} & \textbf{96.5} & \textbf{92.6} \\
        \bottomrule
    \end{tabular}
    \label{tab:Performance_Comparison_on_GenImage_Comprehensive}
    }
\end{table*}

\subsection{Comprehensive Comparisons on GenImage}
To further underscore the effectiveness of \methodName, we expand our evaluation on the GenImage dataset to include a broader set of baseline methods, covering both classic and recently proposed detectors. All approaches are trained on the SDv1.4 subset and tested on eight distinct generative subsets: Midjourney, SDv1.4, SDv1.5, ADM, GLIDE, Wukong, VQDM, and BigGAN. As shown in \cref{tab:Performance_Comparison_on_GenImage_Comprehensive}, \methodName\ achieves the highest average accuracy of 92.6\%, demonstrating robust generalization across diverse generative styles.

Compared to earlier analyses, this comprehensive comparison incorporates additional methods such as ESSP \cite{DBLP:journals/corr/abs-2402-01123} and NPR \cite{DBLP:conf/cvpr/TanLZWGLW24}, offering deeper insights into the relative strengths and weaknesses of each approach. While some detectors (e.g., UnivFD \cite{DBLP:conf/cvpr/OjhaLL23}, NPR \cite{DBLP:conf/cvpr/TanLZWGLW24}, and DRCT \cite{DBLP:conf/icml/ChenZYY24}) exhibit competitive results on subsets closely resembling their training distributions, their performance degrades when confronted with more structurally distinct generators (e.g., BigGAN). 
In contrast, \methodName\ maintains strong accuracy across all subsets, with a notable 96.5\% on BigGAN. We attribute this resilience to both our multimodal prompt learning and discriminative representation learning, which captures generator-agnostic features by aligning image and text embeddings.

\section{Additional Ablation Study} \label{sec:appendix_additional_ablation}

\subsection{Mapping Functions}
We investigate the impact of different coupling functions $\tilde{\bm{\theta}}_i = \mathcal{F}_i(\bm{\theta}_i;\bm{\theta}^{\mathcal{F}}_i)$ that map text-anchor embeddings into the vision prompt space. Inspired by CLIP’s original design, where a single linear projection suffices for cross-modal alignment, we compare four candidates:
(1) {1-layer linear (ours)}: a single linear layer mapping ${\bm{\theta}}_i \rightarrow \tilde{\bm{\theta}}_i$.
(2) {2-layer MLP}: \texttt{Linear} \(\rightarrow\) \texttt{ReLU} \(\rightarrow\) \texttt{Linear}.
(3) {Cross-modal attention}: a lightweight self-attention block between $\bm{\theta}_i$ and visual prompts.
(4) {Reverse mapping}: a 1-layer linear mapping from vision prompts $\tilde{\bm{\theta}}_i$ back to word-embedding space ${\bm{\theta}}_i$.

\begin{table}[t]
  \caption{Ablation study on mapping functions.}
  \label{tab:appendix_mapping}
  \begin{tabular}{lcc}
    \toprule
    Mapping function & GenImage Avg. Acc. & Chameleon Acc. \\
    \midrule
    1-layer linear (ours)  & 92.6 & 69.1 \\
    2-layer MLP  & 	92.8 & 68.5 \\
    Cross-modal attention  & 92.2 & 68.2 \\
    $\tilde{\bm{\theta}}_i \rightarrow {\bm{\theta}}_i$  & 90.8 & 67.9 \\
  \bottomrule
\end{tabular}
\end{table}

The result is shown in \cref{tab:appendix_mapping}. While higher-capacity mappers (MLP or attention) match or slightly improve GenImage accuracy, they degrade Chameleon performance (e.g., –0.9\% with attention), suggesting overfitting to training-generator artifacts. The simple linear map achieves equally strong or better results with minimal extra parameters, and thus remains our default choice.

\subsection{Computational Efficiency}

While prompt learning dramatically reduces trainable parameters, we acknowledge the importance of quantifying overall computational cost. As described in \cref{sec:appendix_bank}, our memory bank mechanism incurs only about 2 MB of extra GPU memory (float16), rendering its overhead negligible.
To contextualize MiraGe’s runtime and latency, we compared against the CLIPping prompt-learning baseline under identical conditions (200k images from UniversalFakeDetect, 10 epochs on an NVIDIA L40 GPU), the result is shown in \cref{tab:appendix_computation}.
Despite integrating a discriminative loss and memory bank, MiraGe’s total training time remains nearly identical to CLIPping’s prompt-learning setup (203 min vs.\ 200 min), and its inference latency increases by less than 0.2 ms per image. In contrast, full fine-tuning of CLIP requires nearly five times more training time. These results confirm that MiraGe delivers substantial detection improvements with only minimal additional computational cost over lightweight prompt-based methods.

\begin{table}[t]
  \centering
  \caption{Comparison of detection performance, training time (min), and inference latency (ms/image).}
  \label{tab:appendix_computation}
  \resizebox{\hsize}{!}{
  \begin{tabular}{lcccc}
    \toprule
    Method                              & mAP & Acc & Training Time & Latency \\
    \midrule
    CLIPping (prompt learning)          & 95.2     & 87.4     & 200                 & 3.45               \\
    MiraGe (Ours)                       & 97.8     & 92.0     & 203                 & 3.56               \\
    CLIPping (full fine-tuning)         & 93.5     & 86.7     & 980                 & 3.45               \\
    \bottomrule
  \end{tabular}
  }
\end{table}

\subsection{Effect of Prompt Variations}

We evaluate the sensitivity of MiraGe to different text anchor formulations by testing three prompt templates on the UniversalFakeDetect benchmark. Detection accuracy (Acc) and mean Average Precision (mAP) are reported in \cref{tab:appendix_prompt_variations}.

\begin{table}[t]
  \centering
  \caption{Impact of prompt template variations on detection performance.}
  \label{tab:appendix_prompt_variations}
  \begin{tabular}{lccc}
    \toprule
    Prompt template                              & Acc & mAP & $\Delta$ \\
    \midrule
    a photo of a real / fake                     & 92.9     & 98.3     & —        \\
    a photo of an authentic / synthetic          & 92.7     & 98.2     & –0.2 / –0.1 \\
    an original / generated image                & 92.5     & 98.0     & –0.4 / –0.3 \\
    \bottomrule
  \end{tabular}
\end{table}

From the result, we can see that swapping “real/fake” for synonyms or rephrasing the template decreases accuracy by at most 0.4\% and mAP by at most 0.3\%.
The simplest binary labels (Real / Fake) achieve near–optimal performance, indicating that MiraGe does not depend on elaborate prompt engineering.
This is because to reduce reliance on any fixed vocabulary, MiraGe employs deep multimodal prompt learning. The prompt embeddings are learnable and adapt during training, so that the final semantic prototypes become data-driven centers rather than the static CLIP embeddings of Real and Fake. This automatic adaptation underlies the observed robustness to prompt variations.
In summary, these additional experiments confirm that MiraGe maintains stable, high performance across reasonable prompt synonyms and template rewrites.

\subsection{Additional Analysis on Training Data Scale} \label{sec:appendix_data_scale}

In \cref{tab:train_size_effect}, we observed that MiraGe’s accuracy rises sharply as the training set increases from 1k to 20k images, but then levels off or even dips slightly up to 200k images. 
This can be explained by three interrelated factors. 
First, the pre-trained CLIP backbone provides highly sample-efficient multi-modal embeddings that capture the core distinctions between real and synthetic images with only a few tens of thousands of examples; for instance, CoOp demonstrates strong few-shot performance with as few as 16 labeled samples per class. 
Second, once the model has seen a representative variety of generative artifact patterns, additional images tend to repeat previously encountered patterns and contribute little new information—indeed, excessive redundancy can introduce noise or low-quality examples, resulting in diminishing returns and occasional performance drops. 
Third, because all fake training images originate from a single ProGAN generator, enlarging that ProGAN pool has limited impact on the model’s ability to detect outputs from other architectures; after ProGAN artifacts are well covered, further gains in cross-generator generalization depend more on embedding diversity across different model families than on simply adding more ProGAN samples. 

In summary, once MiraGe has encountered a sufficiently diverse real-vs-fake sample set, its generalization to unseen generators is governed primarily by the quality and diversity of the learned embeddings, rather than by further increases in dataset size.  

\end{document}